\newcommand{\eg}{\textit{e.g.}}
\newcommand{\ie}{\textit{i.e.}}
\newcommand{\etal}{\textit{et al.}}
\DeclarePairedDelimiter\abs{\lvert}{\rvert}%
\DeclarePairedDelimiter\norm{\lVert}{\rVert}%
\DeclarePairedDelimiter{\ceil}{\lceil}{\rceil}%
\let\oldabs\abs
\def\abs{\@ifstar{\oldabs}{\oldabs*}}
\let\oldnorm\norm
\def\norm{\@ifstar{\oldnorm}{\oldnorm*}}
\newcommand{\true}{1}
\newcommand{\false}{0}
\newcommand{\assignment}{\sigma}
\newcommand{\constraint}[1]{\mathcal{C}_{#1}}
\newcommand{\support}[1]{\textit{sup}(#1)}
\newcommand{\pbformula}{\mathcal{F}}
\newcommand{\acltwo}{{\sf ACL2}}
\newcommand{\cakepb}{{\sf CakePB}}
\newcommand{\coq}{{\sf Coq}}
\newcommand{\drat}{{\sf DRAT}}
\newcommand{\holfour}{{\sf HOL4}}
\newcommand{\isabelle}{{\sf Isabelle}}
\newcommand{\roundingsat}{{\sf RoundingSAT}}
\newcommand{\veripb}{{\sf VeriPB}}
\newcommand{\veripbvtwo}{{\sf VeriPB~v2}}
\definecolor{myblue}{HTML}{2175BA}
\definecolor{mygreen}{HTML}{137403}
\newtheorem{definition}{Definition}
\newtheorem{theorem}{Theorem}
\newtheorem{proof}{Proof}
\newtheorem{lemma}{Lemma}
\newtheorem{example}{Example}
\crefname{section}{Section}{Sections}
\crefname{definition}{Definition}{Definitions}
\crefname{figure}{Figure}{Figures}
\crefname{table}{Table}{Tables}
\crefname{theorem}{Theorem}{Theorems}
\crefname{lemma}{Lemma}{Lemmas}
\crefname{example}{Example}{Examples}
\begin{document}

\title{The Cardinality of Identifying Code Sets for Soccer Ball Graph with Application to Remote Sensing}

\date{19 July 2024}	%

\author{ 
    Anna L.D. Latour\thanks{Corresponding author.}\:\,\orcidlink{0000-0002-5802-8271} \\
	School of Computing \\
	National University of Singapore\\
	Singapore, Republic of Singapore \\
	\href{mailto:a.l.d.latour@tudelft.nl}{a.l.d.latour@tudelft.nl}\\
	\And
    Arunabha Sen\thanks{Work done in part while Arunabha Sen visited the National University of Singapore.},\: Kaustav Basu, Chenyang Zhou \\
    School of Computing and Augmented Intelligence\\
    Arizona State University \\
    Tempe, Arizona, USA \\
    \href{mailto:asen@asu.edu}{asen@asu.edu}, \href{mailto:kaustavb20@gmail.com}{kaustavb20@gmail.com}, \href{mailto:czhou24@asu.edu}{czhou24@asu.edu}
    \And
    Kuldeep S. Meel\,\orcidlink{0000-0001-9423-5270} \\
    School of Computing \\
	National University of Singapore\\
	Singapore, Republic of Singapore \\
    \href{mailto:meel@cs.toronto.edu}{meel@cs.toronto.edu}
}

\maketitle

\begin{abstract}
    In the context of satellite monitoring of the earth, we can assume that the surface of the earth is divided into a set of regions. 
    We assume that the impact of a big social/environmental event spills into neighboring regions.
    Using {\em Identifying Code Sets} (ICSes), we can deploy sensors in such a way that the region in which an event takes place can be uniquely identified, even with fewer sensors than regions. 
    As Earth is almost a sphere, we use a soccer ball as a model. 
    We construct a {\em Soccer Ball Graph} (SBG), and provide human-oriented, analytical proofs that 1) the SBG has at least $26$ ICSes of cardinality ten, implying that there are {\em at least} $26$ different ways to deploy ten satellites to monitor the Earth and 2) that the cardinality of the {\em minimum} Identifying Code Set (MICS) for the SBG is {\em at least} nine. 
    We then provide a machine-oriented formal proof that the cardinality of the MICS for the SBG is in fact {\em ten}, meaning that one must deploy at least ten satellites to monitor the Earth in the SBG model. 
    We also provide machine-oriented proof that there are {\em exactly} 26 ICSes of cardinality ten for the SBG.
\end{abstract}

\keywords{
    Identifying Codes,
    Vertex Identification,
    Sensor Placement,
    Network Monitoring,
    Dominating Sets,
    Automated Reasoning,
    Certification and Verification,
    Pseudo-Boolean Reasoning,
    Soccer Balls,
    Soccer Ball Graphs
}

\section{Introduction}
\label{sec:introduction}

We study an event monitoring problem with satellites as sensors. The events that we focus on may be environmental (drought/famine), social/political (social unrest/war) or extreme events (earthquakes/tsunamis). Such events take place in {\em regions} on the surface of the earth, where a region  may be a continent, a country, or a set of neighboring countries. 

The sensors that we envisage for monitoring such events are satellites placed in orbits surrounding the earth. 
A satellite constellation that can be deployed for such monitoring purposes is shown in \cref{subfig:constellation}. 
Examples of such  constellations include the Global Positioning System (GPS) for navigation, the Iridium and Globalstar satellite telephony, and the Disaster Monitoring Constellation (DMC) for remote sensing. 
In particular, DMC is designed to provide earth imaging for disaster relief and was used extensively to monitor the impact of the Indian Ocean Tsunami in December 2004, Hurricane Katrina in August 2005, and several other floods, fires and disasters. 
The problem that we address in this paper is directly relevant to the services being provided by organizations such as the DMC. 

\begin{figure}[thb]
	\begin{center}
		\centering
        \begin{subfigure}[c]{0.27\textwidth}
            \includegraphics[width=\textwidth, keepaspectratio]{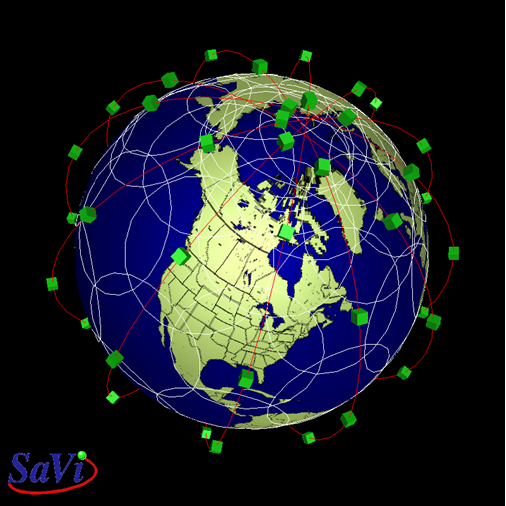}
            \caption{Satellite Constellation Covering Earth.}
            \label{subfig:constellation}
        \end{subfigure}
        \hfill
        \begin{subfigure}[c]{0.23\textwidth}
            \includegraphics[width=\textwidth, keepaspectratio]{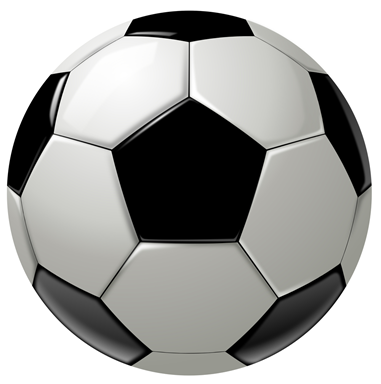}
            \caption{A Truncated Icosahedron.}
            \label{subfig:soccerball}
        \end{subfigure}
        \hfill
        \begin{subfigure}[c]{0.45\textwidth}
            \includegraphics[width=\textwidth, keepaspectratio]{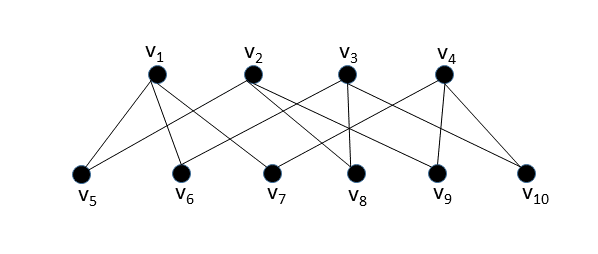}
            \caption{Graph with Identifying Code Set $\{v_1, v_2, v_3, v_4\}$.}
            \label{subfig:IC_1_Graph}
        \end{subfigure}
	\end{center}
	\caption{Satellites as sensors and soccer ball as a model of planet Earth.}
\end{figure}

In this work, we use a soccer ball as a model of the spheroid that is planet Earth.
A standard soccer ball is a {\em truncated icosahedron} with $12$ pentagonal and $20$ hexagonal patches~\cite{kotschick06} (shown in \cref{subfig:soccerball} as black and white patches).
We associate a patch on the surface of the ball with a region on the surface of the Earth. Accordingly, in our model the surface of the Earth is partitioned into $32$ regions. We assume that the coverage area of a satellite corresponds to a patch (region) and events are confined to a region. With such a framework, it is clear that with $32$ satellites (one per region), all the $32$ regions can be effectively monitored.

However, if we assume that the {\em impact} of an event in one region will {\em spill} into the neighboring regions, and as such there will be indicators of such events in neighboring regions, then a substantially lower number of satellites may be sufficient for effective monitoring of all the regions. 
As an example of impact of an event spilling over to neighboring regions, one can think of a situation where war breaking out in one region can trigger an exodus of refugees to the neighboring regions. 
As these sensors are expensive, one would like to deploy as few sensors as possible, subject to the constraint that all the regions can be effectively monitored. 
In this work, we propose to use {\em Identifying Codes}~\cite{karpov98} as a tool for determining how many sensors to deploy, and on which regions to employ them.

In particular, we show that ten satellites are {\em sufficient} to {\em effectively monitor} $32$ regions in the sense that, if an event breaks out in a region, that region can be {\em uniquely identified}. 
In fact there exists $26$ different ways of deploying ten satellites that will achieve the effective monitoring task. 
We also establish that the effective monitoring task cannot be accomplished by deployment of {\em fewer than} nine satellites.

Our approach to proving these results is two-fold.
First, we provide human-oriented, `traditional' pen-and-paper proofs to show that we need {\em at least} nine satellites for the task of monitoring the Earth, and that there exist {\em at least} $26$ different ways to deploy ten satellites for this task.
Then, we provide machine-oriented {\em formal} proofs to show that, in fact, we need at least {\em ten} satellites for the task of monitoring the Earth, and that the number of ways to deploy ten satellites for this purpose is {\em exactly} $26$.

Our choice for this approach is motivated by the ongoing discussion in the Mathematics community about the nature of proofs of mathematical conjectures.
In a recent commentary article~\cite{BBB+24} for the American Mathematical Society, Benzm\"uller proposed to present readers with 
\begin{itemize}
    \item a human-oriented, `traditional' proofs, aimed at providing insight into the `why' of the truth of the conjecture, and, ideally, inspiration for the discovery and proof of new mathematical truths, and
    \item a machine-oriented, formal proof, which should increase the trustworthiness of the result.
\end{itemize}
In this work, we indeed aim to do both.

We are enabled by the recent release of \veripbvtwo{}~\cite{GMM+20,GMNO22,BGMN22,EGMN20,GMN20,GN21}, a proof verifier that can formally verify the correctness {\em cutting planes}~\cite{Gom58} proofs of the unsatisfiability of sets of {\em pseudo-Boolean constraints}~\cite{RM21,BMH+21} (PB constraints, for short).
This allows us to capture the statement \textit{`Nine satellites are sufficient to monitor the Earth'} as a set of PB constraints, and {\em prove} that that statement is false.
We also use the PB solver and verifier to generate formal proofs that {\em ten} satellites {\em are} sufficient to monitor the Earth, and that there are exactly $26$ different ways of deploying those ten satellites.

The rest of this work is organised as follows.
We discuss existing work related to Identifying Codes, proof logging and verification in \cref{sec:related-work}.
Then, we discuss some notation and preliminaries regarding graphs, Identifying Codes and soccer balls, after which we describe the precise problem of satellite deployment in \cref{sec:problemformulation}.
We present a human-oriented traditional proof that ten satellites are sufficient to monitor the Earth in \cref{sec:upper-bound}, where we also show that are at least $26$ ways of deploying ten satellites to do so.
Similarly, \cref{sec:combinatorial-lower-bound} provides a human-oriented proof that we need {\em at least} nine satellites to do so.
We then move on to the formal proofs in this work.
\Cref{sec:machine-verifiable-proofs} provides some preliminaries on pseudo-Boolean solving, and describes the basic idea of proof logging for the cutting-planes proof system.
Then, in \cref{sec:pb-proof} we describe the steps we took to obtain a proof that nine satellites are not enough to monitor the Earth, but ten satellites are.
We also give a formal, machine-oriented proof that there are exactly different $26$ ways to deploy those ten satellites.
Finally, we conclude this work in \cref{sec:conclusion}.
\section{Related Work}
\label{sec:related-work}

We first briefly discuss existing work on the topic of Identifying Codes. 
Then, we provide a short history of machine-generated proofs in the context of combinatorial solving.

\subsection{Identifying Codes}

The study of Identifying Codes and its applications in sensor network domains has a history that spans two and a half decades.
Karpovsky \etal{}~\cite{karpov98} introduced the concept of Identifying Codes in 1998 and provided results for Identifying Codes for graphs with specific topologies, such as binary cubes and trees.
Charon \etal{} studied complexity issues related to computation of minimum Identifying Codes for graph and showed that in several types of graph, the problem is NP-hard \cite{charon2002,charon2003}. 
Auger showed that the problem can be solved in linear time if the graph happened to be a tree, but even for a planar graph the problem remains NP-complete~\cite{auger10}.

Using Identifying Codes, Laifenfeld \etal{} studied covering problems in \cite{laifen08} and joint monitoring and routing in wireless sensor networks in \cite{laifen09}. 
Basu \etal{} recently applied Identifying Codes in problems regarding the identification of criminals in social networks~\cite{BS21a} and the identification of spreaders of misinformation in online networks~\cite{BS21b}. 
Ray \etal{} in \cite{ray03} generalized the concept of Identifying Codes, to incorporate robustness properties to deal with faults in sensor networks. 

A classical approach to solving Identifying Code problems for networks, is to encode the problem as an {\em integer-linear program (ILP)} or {\em mixed-integer program (MIP)}~\cite{BS21a,BS21b} and to solve it with an off-the-shelf MIP solver.
Recently, Latour \etal{} proposed an alternative method, in which they reduce the problem of finding a particular type of Identifying Codes in a network to the computationally harder problem of finding an {\em Independent Support} of a Boolean formula, allowing for an exponentially smaller encoding of the problem~\cite{LSM23}.
Approximation algorithms for computation of Identifying Codes for some special types of graphs are presented in \cite{xiao98, suomela07}.

The problem of finding Identifying Codes in graphs is closely related to that of {\em graph-constrained group testing}~\cite{CKMS10}, which uses the monitoring of {\em paths} rather than {\em nodes} in the network to uniquely identify network failures.
This field, known as {\em network tomography}, has seen some recent advances, including, for example, quantitative studies of the maximum number of simultaneous failures that can be uniquely identified~\cite{MHS+17}.

Topological and combinatorial properties of soccer balls have been studied extensively in \cite{kotschick06}.

\subsection{Machine-Verifiable Formal Proofs}

Over the five decades, different communities (including, but not limited to, the Programming Languages community, the Mathematics community, and the Boolean Satisfiability Solving community) have discovered the merits of {\em formal, machine-readable} proofs of mathematical conjectures.
The generation and verification for such proofs is motivated by the observation that, in practice, mathematical proofs often turn out to be wrong (even if the conjecture they `prove' is actually correct), see, \eg{}, Lamport's website on the topic~\cite{Lam19}.

In a recent commentary article for the American Mathematical Society, Benzm\"uller expressed the dream that, in the future, math papers will contain proofs that {\em integrate} human-oriented `traditional' proofs (to provide intuition to the reader) and machine-oriented formal proofs (to provide trust in the correctness of the result).
While integration of these two is a noble goal, it is not the focus of this paper.
Hence, we provide both types of proof, sans integration.
Assuming that the reader has an intuitive understanding of what a `traditional' proof is, we now focus on providing a short history of the use of machine-verifiable proofs in the context of combinatorial solving.
\section{Problem Formulation}
\label{sec:problemformulation}

In this section, we first provide some preliminaries regarding graphs, and give the definitions of the variations of the Identifying Code Set problem that we study in this work.
We also reformulate the Identifying Code Set problem, presenting it as a variant of the Graph Coloring problem.
Then, we give a specification of the Soccer Ball Graph that we use as a model of planet Earth in this work.

\subsection{Graphs, Identifying Codes and Identifying Code Sets}

In this work, we study a combinatorial problem that we model on an {\em undirected graph} $G := (V,E)$, where $V$ denotes the set of {\em nodes} of the graph, and $E$ denotes the set of {\em edges}.
We use $N(v)$ to denote the {\em neighborhood} of a node $v \in V$, \ie{}, the set of nodes that are adjacent to $v$.
Similarly, we denote the {\em closed neighborhood} of a node $v \in V$ as $N^+(v) := \{v\} \cup N(v)$.

The notion of {\em Identifying Codes}~\cite{karpov98} has been established as a useful concept for optimizing sensor deployment in multiple domains. 
In this paper, we use Identifying Codes of the {\em simplest form}, meaning that we only consider the $1$-hop neighbourhood of a node to determine its identifying code. In the more general version, we might consider the $k$-hop neighbourhood, with $k \in \mathbb{N}^+$.
For the scope of this work, however, we define Identifying Codes as follows. 
\begin{definition}
    \label{def:ICS}
    A vertex set $V^\prime \subseteq V$ of a graph $G = (V, E)$ is called an {\em Identifying Code Set (ICS)} if, for all $v, w \in V$ with $v \neq w$, it holds that $N^+(v) \cap V^\prime \neq N^+(w) \cap V^\prime$.
\end{definition}
Using the notion of an Identifying Code Set, we define the define the following combinatorial decision and optimisation problems::
\begin{definition}
    \label{def:ICS-decision-problem}
    Given an undirected graph $G := (V,E)$ on nodes $V$ and $E$ and a set $V^\prime \subseteq V$, the {\em Identifying Code Set decision problem} asks to determine if $V^\prime$ an ICS of $G$.
\end{definition}
\begin{definition}
    \label{def:MICS}
    The {\em Minimum Identifying Code Set (MICS) problem} is to find an Identifying Code Set of {\em smallest cardinality}.
\end{definition}

The vertices of the set $V^\prime$ may be viewed as {\em alphabets} of the code, and the {\em string} made up with the alphabets of $N^+(v)$ may be viewed as the unique ``code'' for the node $v$. 
For instance, consider the graph $G = (V, E)$ shown in \cref{subfig:IC_1_Graph}. 
In this graph $V^\prime = \{v_1, v_2, v_3, v_4\}$ is an ICS, as it can be seen from \cref{tab:exampletable} that $N^+(v) \cap V^\prime$ is {\em unique} for all $v_i \in V$. 

\begin{table}[ht]
    \centering
    \caption{$N^+(v) \cap V^\prime$ results for all $v \in V$ for the graph in \cref{subfig:IC_1_Graph}.}
    \label{tab:exampletable}
    \begin{tabular} {ll}  \toprule
            $N^+(v_1) \cap V^\prime = \{v_1\}$ & $N^+(v_2) \cap V^\prime = \{v_2\}$ \\ 
            $N^+(v_3) \cap V^\prime = \{v_3\}$ & $N^+(v_4) \cap V^\prime = \{v_4\}$ \\ 
            $N^+(v_5) \cap V^\prime = \{v_1, v_2\}$ & $N^+(v_6) \cap V^\prime = \{v_1, v_3\}$ \\ 
            $N^+(v_7) \cap V^\prime = \{v_1, v_4\}$ & $N^+(v_8) \cap V^\prime = \{v_2, v_3\}$ \\ 
            $N^+(v_9) \cap V^\prime = \{v_2, v_4\}$ & $N^+(v_{10}) \cap V^\prime = \{v_3, v_4\}$ \\ \bottomrule
    \end{tabular}
\end{table}

\subsection{Graph Coloring Problems}
\label{subsec:GCS}

The MICS computation problem can be viewed as a novel variation of the classical Graph Coloring problem. 
We will refer to this version as the {\em Graph Coloring with Seepage (GCS) problem}. 
In the classical graph coloring problem, when a color is {\em assigned} (or injected) to a node, only that node is colored. 
The goal of the standard graph coloring problem to use as few distinct colors as possible such that (i) every node receives a color, and (ii) no two adjacent nodes of the graph have the same color.

In the GCS problem, when a color is assigned (or injected) to a node, not only that node receives the color, but also the color {\em seeps} into all the adjoining nodes. 
As a node $v_i$ may be adjacent to two other nodes $v_j$ and $v_k$ in the graph, if the color red is injected to $v_j$, not only $v_j$ will become red, but also $v_i$ will also become red as it is adjacent to $v_j$. 
Now if the color blue is injected to $v_k$, not only $v_k$ will become blue, but also the color blue will seep in to $v_i$ as it is adjacent to $v_k$. 
Since $v_i$ was already colored red (due to seepage from $v_j$), after color seepage from $v_k$, its color will be a {\em combination of red and blue}.  
At this point all three nodes  $v_i$, $v_j$ and $v_k$ have a color and all of them have distinct colors (red, blue and the combination of the two). 
{\em The color assigned to a node may be due to, (i) only injection to that node, (ii) only seepage from other adjoining nodes and (iii) a combination of injection and seepage}. 
We refer to the colors injected at the nodes as {\em atomic} colors.
The colors formed by the combination of two or more atomic colors are referred to as {\em composite} colors.
The colors injected at the nodes (atomic colors) are all {\em unique}. 
The goal of the GCS problem is to inject colors to as few nodes as possible, such that (i) every node receives a color, and (ii) no two nodes of the graph have the same color. 

Suppose that the node set $V'$ is an ICS of a graph $G = (V, E)$ and $|V'| = p$.  In this case if $p$ distinct colors are injected to $V'$ (one distinct atomic color to one node of $V'$ ), then as by the definition of ICS for all $v \in V$, if $N^+(v) \cap V'$ is unique, all nodes of  $G = (V, E)$ will have a unique color (either atomic or composite). Thus computation of MICS is equivalent to solving the GCS problem.

\subsection{The Soccer Ball Graph}

From the soccer ball, we construct a graph (referred to as a {\em Soccer Ball Graph (SBG)}) where each of the $32$ regions is represented as a node and two nodes have an edge between them if the corresponding regions share a boundary. 
The construction rules for the SBG are given in \cref{sec:problemformulation} and a two-dimensional layout of the SBG is shown in \cref{subfig:SBG}. 
Using a human-oriented, analytical proof, we establish that the upper and lower bounds of the MICS problem for the SBG are ten and nine respectively. 
Using a machine-oriented formal proof, we then establish that the lower bound of nine is not strict, and that there are in fact no solutions to the MICS problem for the SBG with a cardinality of less than ten.
We prove that there exist at least $26$ different Identifying Code Sets of size ten for the SBG, using a human-oriented, analytical proof, and establish that there are in fact {\em exactly} $26$ different Identifying Code Sets of size ten for the SBG, using a machine-oriented proof.

\begin{figure}[thb]
	\begin{center}
        \begin{subfigure}[c]{0.20\textwidth}
            \includegraphics[width=\textwidth, keepaspectratio]{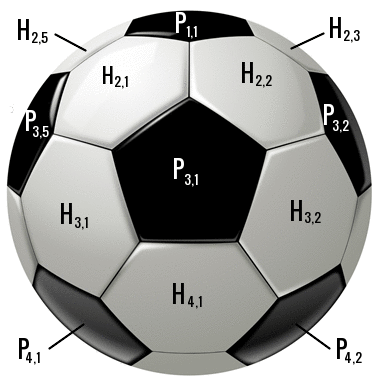}
            \caption{Labels of nodes corresponding to the patches on the soccer ball.}
            \label{subfig:footprint}
        \end{subfigure}
		\hfill
        \begin{subfigure}[c]{0.78\textwidth}
            \includegraphics[width=\textwidth, keepaspectratio]{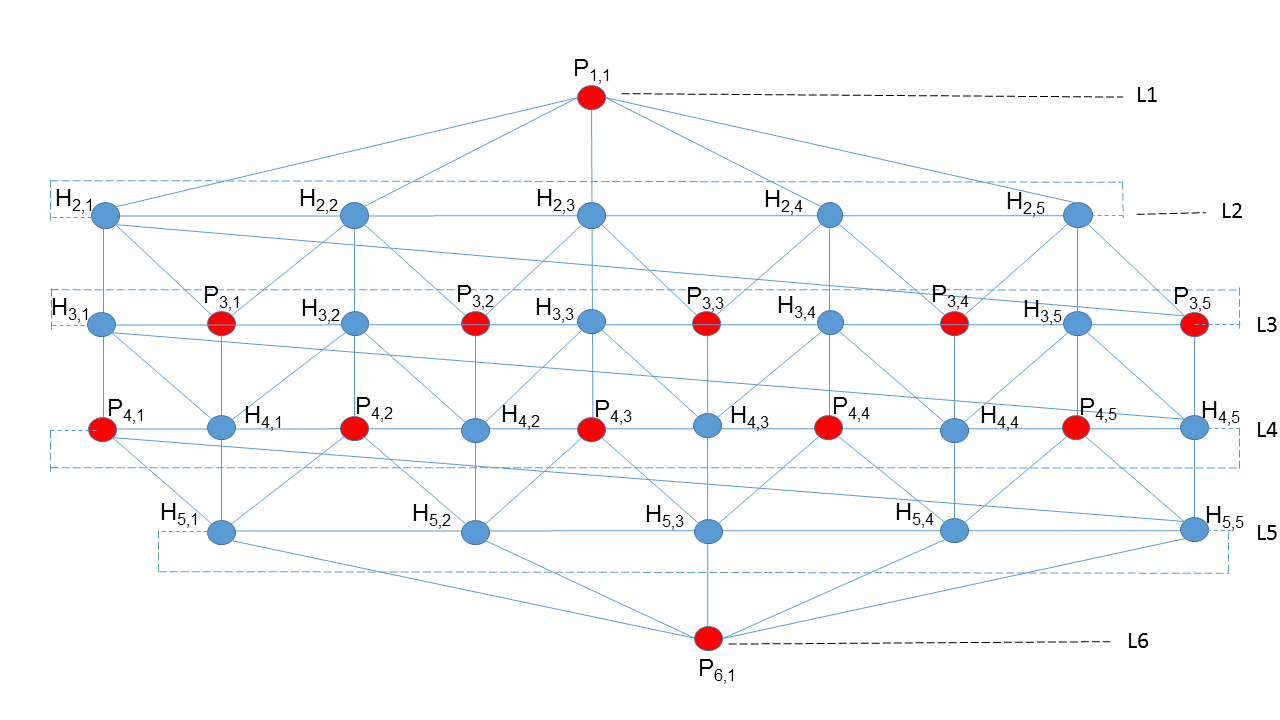}
            \caption{Soccer Ball Graph.}
            \label{subfig:SBG}
        \end{subfigure}
	\end{center}
	\caption{A soccer ball and the corresponding Soccer Ball Graph (SBG).}
	\label{fig:SBG}
\end{figure}

A Soccer Ball Graph (SBG) $G = (V, E)$ is defined in the following way. 

\begin{definition}
    \label{def:SBG}
    The graph comprises 32 nodes and 90 edges. The $32$ nodes correspond to $32$ patches ($20$ hexagonal and $12$ pentagonal) of the soccer ball and two nodes in the graph have an edge between them if the corresponding patches share a boundary. 
\end{definition}

A graph can have different layouts on a two dimensional plane. 
We show one layout of the SBG in \cref{subfig:SBG} where the nodes are labeled using a set of rules. 
The soccer ball, placed on a two dimensional plane (as shown in \cref{subfig:footprint}), has a pentagonal patch on top. 
There are six hexagonal patches adjacent to this pentagonal patch. 

We consider a {\em layering scheme}, where the node corresponding to the pentagonal patch on top is in Layer 1 (L1), the six nodes corresponding to six hexagonal patches adjacent to the pentagonal patch on top are in Layer 2 (L2) and so on. 
Following this layering scheme, all $32$ nodes can be assigned to six layers, L1 through L6, as shown in \cref{subfig:SBG}. 
In this scheme, one node is assigned to L1, five nodes to L2, ten nodes to L3, ten nodes to L4, five nodes to L5, and one node to L6. 
There is only one pentagonal node in layers L1 and L6 and we refer to these two nodes as $P_{1, 1}$ and $P_{6, 1}$ respectively. 
There are five hexagonal nodes in layers L2 and L5 and we refer to these nodes as $H_{2, j}$, with $1 \leq j \leq 5$ and $H_{5, j}$, with $1 \leq j \leq 5$, respectively.  
There are five hexagonal and five pentagonal nodes in layers L3 and L4 and we refer to these nodes as $H_{i, j}$ (with $i \in \{3, 4\}$ and $1 \leq j \leq 5$), and $P_{i, j}$ (with $i \in \{3, 4\}$ and $1 \leq j \leq 5$), respectively.

The vertex set $V$ of the SBG is divided into two subsets: a $P$ (for Pentagon) and $H$ (for Hexagon), with $12$ and $20$ members, respectively. 
It may be noted from \cref{fig:SBG} that $P$-type nodes appear only on layers $1$, $3$, $4$ and $6$ and $H$-type nodes appear only on layers $2$, $3$, $4$ and $5$. 
Hence, the SBG nodes are $V := P \cup H$, with $P = \{P_{1,1}\} \cup  \{P_{i,j} \mid 3 \leq i \leq 4, 1 \leq j \leq 5 \} \cup \{P_{6,1}\}$ and $H = \{ H_{i,j} \mid 2 \leq i \leq 5, 1 \leq j \leq 5 \}$.

The edge set $E$ of the SBG is divided into $17$ subsets: $E = \cup_{i=1}^{17} E_i$. 
Each subset is defined in \cref{tab:colorassignment}.

\begin{table}[htp]
	\centering
        \caption{Edge set definitions for the Soccer Ball Graph.}
	    \label{tab:colorassignment}
		\begin{tabular} {l}  \toprule
		\textbf{SBG Edge Construction} \\ \midrule
		$E_1 := \{(P_{1,1}, H_{2,j}) \mid 1 \leq j \leq 5\}$\\
		$E_2 := \{(P_{6,1}, H_{5,j}) \mid 1 \leq j \leq 5\}$\\
		$E_3 := \{(H_{i,j}, H_{i,(j+1)\bmod 5}) \mid i \in \{2,5\}, 1 \leq j \leq 5\}$\\
		$E_4 := \{(H_{3,j}, P_{3,j}) \mid 1 \leq j \leq 5\}$\\
		$E_5 := \{(P_{3,j}, H_{3,(j+1)\bmod 5}) \mid 1 \leq j \leq 5\}$\\
		$E_6 := \{(H_{4,j}, P_{4,(j+1)\bmod 5}) \mid 1 \leq j \leq 5\}$\\
		$E_7 := \{(P_{4,j}, H_{4,j}) \mid 1 \leq j \leq 5\}$\\
		$E_8 := \{(H_{2,j}, H_{3,j}) \mid 1 \leq j \leq 5\}$\\
		$E_9 := \{(H_{2,j}, P_{3,(j-1)\bmod 5}) \mid 1 \leq j \leq 5\}$\\
		$E_{10} := \{(H_{2,j}, P_{3,j}) \mid 1 \leq j \leq 5\}$\\
		$E_{11} := \{(H_{3,j}, P_{4,j}) \mid 1 \leq j \leq 5\}$\\
		$E_{12} := \{(H_{3,j}, H_{4,(j-1)\bmod 5}) \mid 1 \leq j \leq 5\}$\\
		$E_{13} := \{(H_{3,j}, H_{4,j}) \mid 1 \leq j \leq 5\}$\\
		$E_{14} := \{(P_{3,j}, H_{4,j}) \mid 1 \leq j \leq 5\}$\\
		$E_{15} := \{(H_{4,j}, H_{5,j}) \mid 1 \leq j \leq 5\}$ \\
		$E_{16} := \{(P_{4,j}, H_{5,j}) \mid 1 \leq j \leq 5\}$\\
		$E_{17} := \{(P_{4,j}, H_{5,(j-1)\bmod 5}) \mid 1 \leq j \leq 5\}$ \\ \bottomrule
		\end{tabular}
\end{table}

With the formal definition of the SBG complete, it may be observed that the problem of determining the fewest number of satellites necessary to uniquely identify the region (among $32$ regions) where a significant event has taken place is equivalent to computation of the Minimum Identifying Code Set problem for the SBG.

\section{Upper Bound of MICS of SBG}
\label{sec:upper-bound}

In this section, we first show that MICS of the SBG is at most ten and there exists at least $26$ ICSes of size ten.

\begin{theorem}
    \label{thrm:MICS-of-SBG-is-at-most-10}
	The MICS of SBG is at most ten.
\end{theorem}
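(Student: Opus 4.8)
The plan is to prove the bound \emph{constructively}: by \cref{def:MICS}, the minimum Identifying Code Set can be no larger than any particular ICS we can display, so it suffices to exhibit a single $V' \subseteq V$ with $\abs{V'} = 10$ and verify that it satisfies the separation condition of \cref{def:ICS}. The entire argument therefore reduces to two tasks: (i) producing a good candidate set $V'$ of ten vertices, and (ii) checking, for that fixed $V'$, that the $32$ codes $N^+(v) \cap V'$ are pairwise distinct as $v$ ranges over $V$.

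To guide task (i), I would first record the fivefold rotational symmetry built into the construction of \cref{def:SBG}: the rotation $\rho$ that cyclically increments the second index $j$ by one (identifying index $6$ with $1$) while fixing the two poles $P_{1,1}$ and $P_{6,1}$ is a graph automorphism, since every edge family $E_1,\dots,E_{17}$ of \cref{tab:colorassignment} is defined uniformly in $j$ (one checks, \eg{}, that $\rho$ sends the generic member of $E_9$, the pair $(H_{2,j},P_{3,(j-1)\bmod 5})$, to the member of $E_9$ indexed by $j+1$). Because $\rho$ has order five and every non-polar orbit has size five, a candidate ICS that is invariant under $\rho$ must be a union of full orbits, so a symmetric set of size $10$ amounts to choosing exactly two orbits (two of the layer-rings $\{H_{i,j}\}_j$ or $\{P_{i,j}\}_j$). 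This collapses the search from $\binom{32}{10}$ subsets down to a short list of orbit-pairs that can be inspected by hand, and it also explains a posteriori why many size-ten ICSes exist, as $\rho$ carries any asymmetric solution to further ones.

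For task (ii), I would fix the chosen $V'$ and, for each node $v$, read its closed neighborhood $N^+(v) = \{v\} \cup N(v)$ directly off the seventeen edge families, intersect with $V'$ to obtain its code, and collect the results into a $32$-row table in the spirit of \cref{tab:exampletable}. The theorem then follows once that table exhibits $32$ distinct codes, \ie{} at most one node receives the empty code and no two nodes share a common nonempty code. If $V'$ is taken to be $\rho$-invariant, the $32$ checks organize into one representative per orbit, so only about seven rows must genuinely be computed and the rest follow by applying $\rho$.

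I expect the main obstacle to lie in the \emph{choice} of the two orbits, not in the verification: symmetric size-ten sets tend to fail precisely at separating a pole from the ring surrounding it, or at distinguishing two nodes seated symmetrically between the selected vertices, so some trial and error (or a short counting argument ruling out orbit-pairs that cannot cover and separate the poles) is needed before a working $V'$ emerges. A secondary, purely clerical pitfall is the cyclic indexing $(j+1)\bmod 5$ and $(j-1)\bmod 5$ in \cref{tab:colorassignment}, which makes it easy to misread a neighborhood; the neighborhoods must be tabulated with care so that no two codes coincide only because of an index slip.
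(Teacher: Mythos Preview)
Your proposal is correct and mirrors the paper's approach: the paper exhibits the $\rho$-invariant set $V' = \{H_{2,j} : 1 \le j \le 5\} \cup \{H_{5,j} : 1 \le j \le 5\}$---precisely a union of two of your orbit-rings---and verifies via a $32$-row table (\cref{tab:color-assignment-after-seepage-in-the-SBG}) that all codes $N^+(v) \cap V'$ are distinct. Your symmetry observation and your anticipated pitfall about separating the poles from their surrounding rings both apply cleanly to this particular choice.
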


\begin{proof}
Inject colors $A, B, C, D, E$ to the nodes $H_{2, j}, 1 \leq j \leq 5$ and colors $E, F, G, H, I, J$ to the nodes $H_{5, j}, 1 \leq j \leq 5$. Injection of ten different colors at these ten nodes, will cause color seepage to all other nodes of SBG. The color seepage will be constrained by the topological structure of the SBG. It may be verified that because of the constraint imposed by the SBG structure, and the fact that seepage takes place only to the neighbors of the node where a color is injected, the $32$ nodes of the SBG will have the color assignment shown in \cref{tab:color-assignment-after-seepage-in-the-SBG}. In the entries of \cref{tab:color-assignment-after-seepage-in-the-SBG}, $H_{2, 1}: A^*BE$ implies that the color $A$ was {\em injected} at the node $H_{2, 1}$ and the colors  $B$ and $E$ {\em seeped} into the node $H_{2, 1}$, from the adjacent nodes $H_{2, 2}$ and $H_{2, 5}$, where the colors $B$ and $E$ were injected. In general, if an alphabet $A$ through $E$ (representing distinct colors), appears {\em with} a * as a part of a string attached to a node (such as $H_{2, 1}$), it implies that the color was {\em injected} at that node. On the other hand, if an alphabet appears {\em without}  a * as a part of a string attached to a node, it implies that the color {\em seeped} into that node from one of the adjacent nodes. It may be verified that the color assignment to the nodes, as shown in \cref{tab:color-assignment-after-seepage-in-the-SBG} is {\em unique} (\ie{}, no two nodes have the same color or {\em strings} assigned to them).
\end{proof}

\begin{figure}[tb]
    \begin{center}
    \begin{subfigure}[c]{0.48\textwidth}
        \includegraphics[width=\textwidth, keepaspectratio]{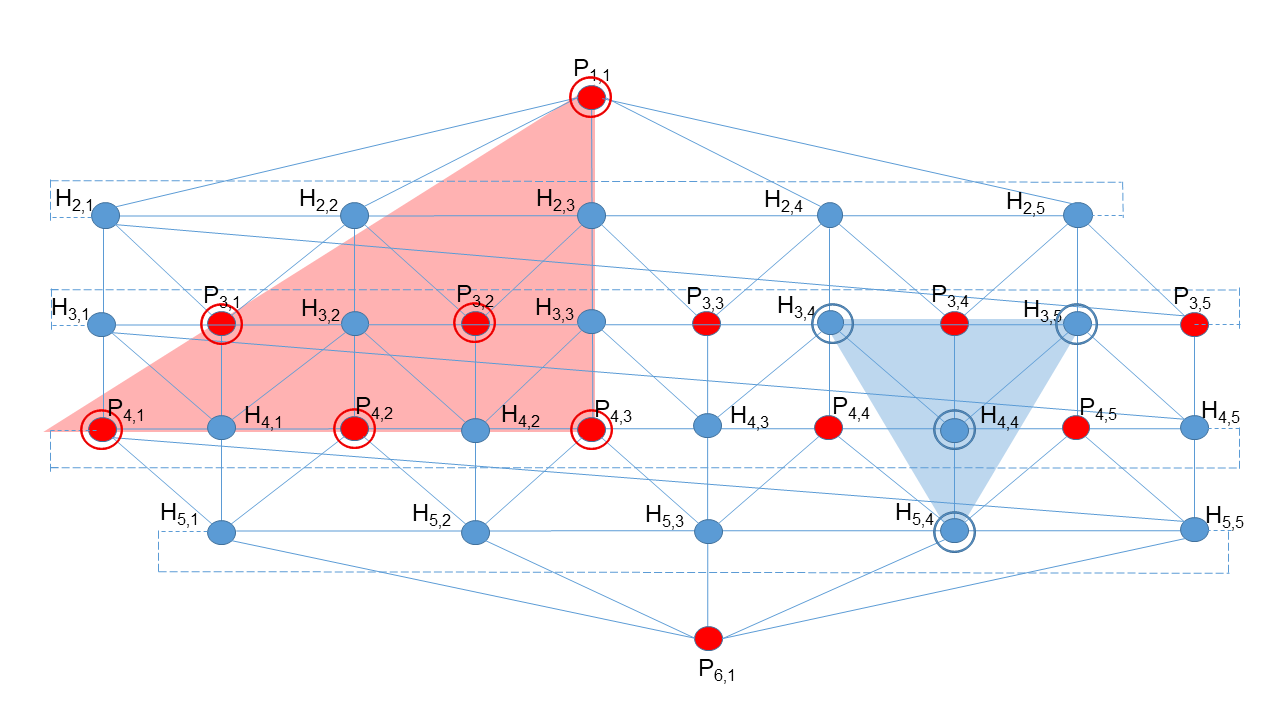}
        \caption{Class IIA Motif Assignment I.}
        \label{subfig:classIIA-assI}
    \end{subfigure}
    \hfill
    \begin{subfigure}[c]{0.48\textwidth}
        \includegraphics[width=\textwidth, keepaspectratio]{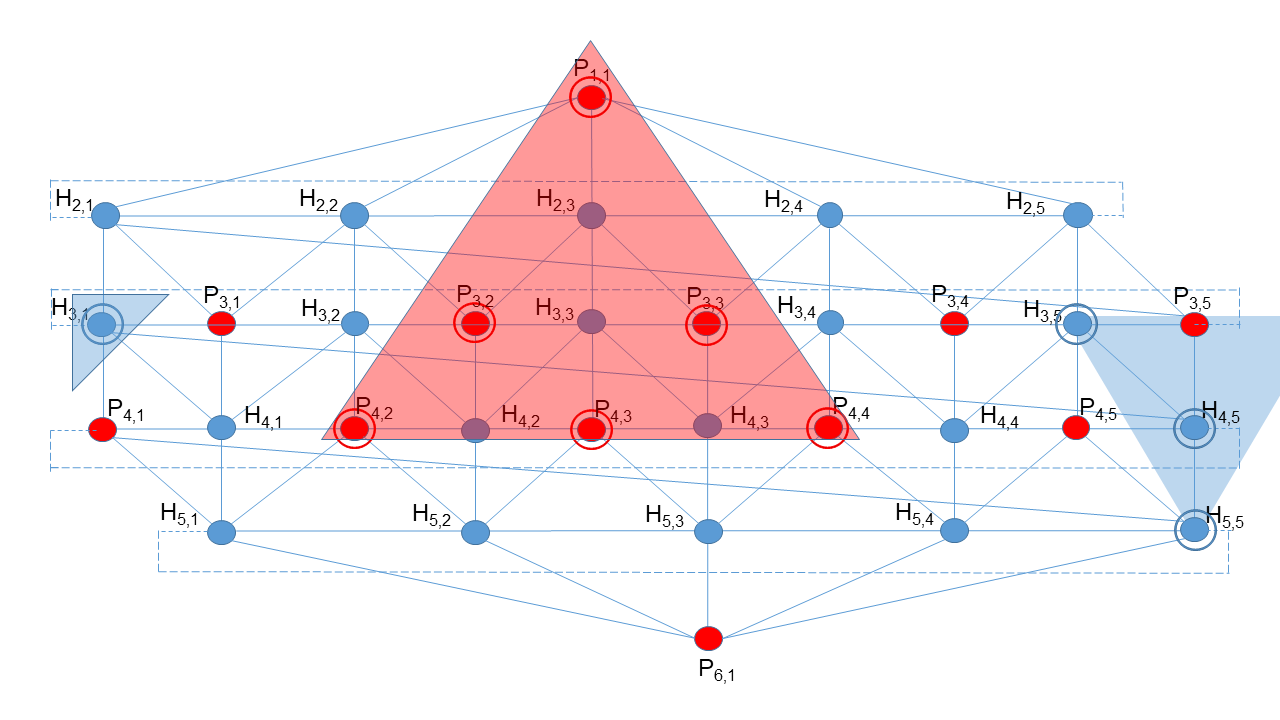}
        \caption{Class IIA Motif Assignment II (Assignment I shifted one position to the right).}
        \label{subfig:classIIA-assII}
    \end{subfigure}
    \end{center}
\caption{Examples of Color Assignments using Motif IIA. A circle drawn around a node indicates that that node is injected with color. We visualise the motifs by drawing red convex hulls around a $P$-type nodes that are injected with color, and blue convex hulls around all $H$-type nodes that are injected with color.}
\label{fig:classIIA}
\end{figure}

\begin{table}[ht]
	\centering
    \caption{Color assignment at nodes after seepage in the SBG, due to injection of Class I.}
	\label{tab:color-assignment-after-seepage-in-the-SBG}
		\begin{tabular} { l   l  l   l  }  \toprule
		Node: Color & Node: Color & Node: Color & Node: Color \\ \midrule
		$P_{1, 1}$: $ABCDE$ & $H_{2, 1}$: $A^*BE$ & $H_{2, 2}$: $AB^*C$ & $H_{2, 3}$: $BC^*D$ \\ 
		$H_{2, 4}$: $CD^*E$ & $H_{2, 5}$: $DE^*A$ & $H_{3, 1}$: $A$ & $P_{3, 1}$: $AB$ \\ 	
		$H_{3, 2}$: $B$ & $P_{3, 2}$: $BC$ & $H_{3, 3}$ : $C$ &  $P_{3, 3}$: $CD$ \\ 
		$H_{3, 4}$: $D$ & $P_{3, 4}$: $DE$ & $H_{3, 5}$ : $E$ & $P_{3, 5}$: $AE$ \\ 
		$P_{4, 1}$: $JF$ & $H_{4, 1}$: $F$ & $P_{4, 2}$ : $FG$ & $H_{4, 2}$: $G$ \\ 
		$P_{4, 3}$: $GH$ & $H_{4, 3}$: $H$ & $P_{4, 4}$ : $HI$ & $H_{4, 4}$: $I$ \\ 	
		$P_{4, 5}$: $IJ$ & $H_{4, 5}$: $J$ & $H_{5, 1}$: $JF^*G$ & $H_{5, 2}$: $FG^*H$ \\ 
		$H_{5, 3}$: $GH^*I$ & $H_{5, 4}$: $HI^*J$ & $H_{5, 5}$: $IJ^*F$ & $P_{6, 1}$: $FGHIJ$ \\ \bottomrule
		\end{tabular}
\end{table}

\begin{theorem}
\label{thrm:at-least-26}
There exist at least $26$ distinct size-10 ICSes of the SBG. 
\end{theorem}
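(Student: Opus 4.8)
The plan is to exploit the fivefold rotational symmetry of the SBG to manufacture many ICSes from a few seeds, and then to verify distinctness. First I would observe that the map $\rho$ which fixes $P_{1,1}$ and $P_{6,1}$ and cyclically increments the second index of every other node (\ie{}, $H_{i,j}\mapsto H_{i,(j\bmod 5)+1}$ and $P_{i,j}\mapsto P_{i,(j\bmod 5)+1}$, representatives taken in $\{1,\dots,5\}$) is an automorphism of the SBG. This follows by reading down the seventeen edge classes of \cref{tab:colorassignment}: each $E_i$ is defined uniformly in $j$ with all arithmetic taken $\bmod 5$, so shifting $j$ by one permutes each $E_i$ onto itself and hence preserves $E$. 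Since graph automorphisms preserve closed neighbourhoods, \cref{def:ICS} gives at once that if $V'$ is an ICS then so is $\rho^k(V')$ for every $k$.

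With this symmetry in hand, the count proceeds by collecting $\langle\rho\rangle$-orbits. The injection set $V'=\{H_{2,j}\mid 1\le j\le 5\}\cup\{H_{5,j}\mid 1\le j\le 5\}$ of \cref{thrm:MICS-of-SBG-is-at-most-10} (the ``Class~I'' motif) is setwise fixed by $\rho$, so its orbit has size one and contributes a single ICS. I would then exhibit five further seed motifs --- the ``Class~IIA'' assignment of \cref{fig:classIIA} together with its siblings, each mixing $P$-type and $H$-type injected nodes --- chosen so that none is $\rho$-invariant. For such a seed the stabiliser in $\langle\rho\rangle$ is trivial, so its orbit under $\rho$ has size exactly five (this is the sense in which \cref{subfig:classIIA-assII} is \cref{subfig:classIIA-assI} ``shifted one position''). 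Five seeds lying in pairwise distinct orbits then yield $5\times 5=25$ ICSes, and together with Class~I this gives the desired $1+25=26$.

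Two verification tasks remain. First, I must confirm that each chosen seed is genuinely an ICS, and here I would reuse the seepage/colouring bookkeeping of \cref{thrm:MICS-of-SBG-is-at-most-10}: injecting ten distinct atomic colours at the ten seed nodes, I compute the colour string $N^+(v)\cap V'$ at each of the $32$ nodes, exactly as in \cref{tab:color-assignment-after-seepage-in-the-SBG}, and check that the $32$ strings are pairwise distinct, which by \cref{def:ICS} is precisely the ICS property. Because $\rho$ is an automorphism, one such check per seed certifies its whole orbit simultaneously. Second, I must show the $26$ sets are pairwise distinct: within an orbit this follows from triviality of the stabiliser, and across orbits it suffices to separate seeds by a $\rho$-invariant --- for instance the multiset recording how the ten injected nodes distribute across the six layers, or across the $P\cup H$ partition of \cref{def:SBG} --- so choosing seeds with visibly different layer-profiles makes the cross-orbit argument a one-line observation.

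The main obstacle is the per-seed ICS verification: with $90$ edges (\cref{def:SBG}) the adjacency is dense and irregular enough that hand-computing and comparing all $32$ closed-neighbourhood strings for each of the five new motifs is error-prone, and it is exactly where a careless motif can silently violate uniqueness. I would therefore tame this by again using $\rho$-symmetry to reduce the thirty-two string computations per seed to the handful of $\rho$-orbit representatives among the nodes, and by deliberately selecting seed motifs whose layer-profiles differ from one another and from Class~I, so that distinctness of the final $26$ sets reduces to inspection rather than case analysis.
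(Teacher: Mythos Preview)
Your plan is correct and follows essentially the same route as the paper: both arguments produce one $\langle\rho\rangle$-fixed ICS (Class~I) and five further rotational orbits of size five, verifying each seed by the seepage/colour-string bookkeeping of \cref{tab:color-assignment-after-seepage-in-the-SBG}. The differences are cosmetic rather than structural. First, the paper exploits a second symmetry you do not use --- the top/bottom reflection swapping layers $1\leftrightarrow 6$, $2\leftrightarrow 5$, $3\leftrightarrow 4$ --- to pair Class~II-A with II-B and Class~III-A with III-B, so that only three non-fixed seeds (II-A, III-A, IV) need a full verification table rather than your five; your purely cyclic approach is cleaner group-theoretically but costs two extra hand checks. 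Second, your parenthetical that all five non-Class-I seeds ``mix $P$-type and $H$-type'' is not quite right: the paper's Class~IV motif consists of ten hexagonal nodes only, so the $P$/$H$ partition alone will not separate it from Class~I --- but your proposed layer-profile invariant does distinguish all six orbits, so the cross-orbit distinctness argument goes through unchanged.
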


\begin{proof}
The $26$ different ways in which ten colors can be injected into ten nodes of the SBG such that every node of the SBG receives a unique color can be divided into four classes.
\begin{description}
    \item[Class I] Inject colors $A, B, C, D, E$ to the nodes $H_{2, j}, 1 \leq j \leq 5$ and colors $E, F, G, H, I, J$ to the nodes $H_{5, j}, 1 \leq j \leq 5$. As shown in \cref{tab:color-assignment-after-seepage-in-the-SBG}, such an injection ensures that each of the $32$ nodes of the SBG receives a unique color. It may be observed that the node set where the colors are injected in this Class all have degree six, corresponding to hexagonal patches on the surface of the soccer ball. Only one ICS of the $26$ belongs to Class I.

\begin{table}[htp]
    \centering
    \caption{Color assignment at nodes after seepage for Class II ICS.}
    \label{tab:color-assignment-after-seepage-for-Class-II-ICS}
    \begin{tabular} { l l }  \toprule 
        \textbf{Node} & \textbf{Color} \\ \midrule
        $P_{1, 1}$ & $P_{1, 1}^c$ \\
        $H_{2, j}$ & $P_{1, 1}^cP_{3,j}^c$ \\ 
        $H_{2, (j+1) \bmod 5}$ & $P_{1, 1}^cP_{3, j}^cP_{3, (j + 1) \bmod 5}^c$ \\
        $H_{2, (j+2) \bmod 5}$& $P_{1, 1}^cP_{3,(j+1) \bmod 5}^c$ \\ 
        $H_{2,(j+3) \bmod 5}$& $P_{1, 1}^cH_{3,(j+3) \bmod 5}^c$ \\ 
        $H_{2,(j+4) \bmod 5}$& $P_{1, 1}^cH_{3, (j + 4) \bmod 5}^c$ \\ 
        $H_{3, j}$& $P_{3,j}^cP_{4,j}^c$ \\ 
        $P_{3, j}$ & $P_{3,j}^c$ \\ 	
        $H_{3, (j+1) \bmod 5}$ &$P_{3,j}^cP_{3, (j+1) \bmod 5}^cP_{4, (j+1) \bmod 5}^c$ \\ 
        $P_{3, (j+1) \bmod 5}$& $P_{3, (j+1) \bmod 5}^c$ \\ 	
        $H_{3, (j+2) \bmod 5}$& $P_{3, (j+1) \bmod 5}^cP_{4, (j+2) \bmod 5}^c$ \\ 
        $P_{3, (j+2) \bmod 5}$& $H_{3, (j+3) \bmod 5}^c$ \\ 
        $H_{3, (j+3) \bmod 5}$& $H_{3, (j+3) \bmod 5}^cH_{4, (j+3) \bmod 5}^c$ \\ 
        $P_{3, (j+3) \bmod 5}$& $H_{3, (j+3) \bmod 5}^cH_{4, (j+3) \bmod 5}^cH_{3, (j+4) \bmod 5}^c$ \\ 
        $H_{3, (j+4) \bmod 5}$& $H_{3, (j+4) \bmod 5}^cH_{4, (j+3) \bmod 5}^c$ \\ 
        $P_{3, (j+4) \bmod 5}$& $H_{4, (j+4) \bmod 5}^c$ \\ 
        $P_{4, j}$& $P_{4, j}^c$ \\ 
        $H_{4, j}$ & $P_{3,j}^cP_{4,j}^cP_{4,(j+1) \bmod 5}^c$ \\ 
        $P_{4, (j+1) \bmod 5}$& $P_{4, (j+1) \bmod 5}^c$ \\ 
        $H_{4, (j+1) \bmod 5}$& $P_{3, (j+1) \bmod 5}^cP_{4, (j+1) \bmod 5}^cP_{4, (j+2) \bmod 5}^c$ \\ 
        $P_{4, (j+2) \bmod 5}$& $P_{4, (j+2) \bmod 5}^c$ \\ 
        $H_{4, (j+2) \bmod 5}$& $P_{4, (j+2) \bmod 5}^cH_{3, (j+3) \bmod 5}^c$ \\  
        $P_{4, (j+3) \bmod 5}$& $H_{3, (j+3) \bmod 5}^cH_{4, (j+3) \bmod 5}^cH_{5, (j+3) \bmod 5}^c$ \\ 
        $H_{4, (j+3) \bmod 5}$& $H_{4, (j+3) \bmod 5}^c H_{3, (j+3) \bmod 5}^c H_{3, (j+4) \bmod 5}^c H_{5, (j+3) \bmod 5}^c$ \\ 
        $P_{4, (j+4) \bmod 5}$& $H_{4, (j+4) \bmod 5}^cH_{3, (j+3) \bmod 5}^cH_{5, (j+3) \bmod 5}^c$ \\ 
        $H_{4, (j+4) \bmod 5}$& $P_{4, j}^cH_{4, (j+4) \bmod 5}^c$ \\ 
        $H_{5, j}$& $P_{4, j}^cP_{4, (j+1) \bmod 5}^c$ \\ 
        $H_{5, (j+1) \bmod 5}$& $P_{4, (j+1) \bmod 5}^cP_{4, (j+2) \bmod 5}^c$ \\ 
        $H_{5, (j+2) \bmod 5}$& $P_{4, (j+2) \bmod 5}^cH_{5, (j+3) \bmod 5}^c$ \\ 
        $H_{5, (j+3) \bmod 5}$& $H_{5, (j+3) \bmod 5}^cH_{4, (j+3) \bmod 5}^c$ \\  
        $H_{5, (j+4) \bmod 5}$& $P_{4, j}^cH_{5, (j+3) \bmod 5}^c$ \\ 
        $P_{6, 1}$& $H_{5, (j+3) \bmod 5}^c$ \\ \bottomrule
    \end{tabular}
\end{table}

    \item[Class II] The node set where the colors injected are in this Class is made up of six nodes of degree five (corresponding to pentagonal patches of the soccer ball) and four nodes of degree six (corresponding to hexagonal patches of the soccer ball). 
    This Class can be subdivided into two sub classes and we refer to them as {\em Class II-A} and {\em Class II-B}, respectively. 
    As shown in \cref{subfig:SBG}, the SBG graph is somewhat symmetric in the sense that the layers $4$, $5$ and $6$ are close to being mirror images of layers $1$, $2$ and $3$.
    Because of this symmetry, the Class II-A color injections are mirror images of the Class II-B color injection. Accordingly, we focus our discussion primarily on Class II-A, as color injection for class II-B be can be obtained easily from color injection in Class II-A. We introduce the notion of a {\em motif}, and by motif we imply a set of either P-type (degree five) or H-type (degree six) nodes. It will be clear from further discussion that the Class II-A solutions comprise of one P-type motif and one H-type motif. 
    These two motifs {\em complement} each other to produce a solution together. 
    The motif pairs can be {\em translated} along the structure of the SBG to produce a set of five solutions that make up the Class II-A. 
    The five solutions that make up the Class II-B can be constructed in a similar fashion.
    For the ICS that belong to Class II, the P-type motif is made up of the set of six nodes $\{P_{1,1}, P_{3,j}, P_{3,(j+1)\bmod 5}, P_{4,j}, P_{4,(j+1)\bmod 5}, P_{4,(j+2)\bmod 5}\}$. 
    The H-type motif that complements the P-type motif is made up of the set of four nodes $\{H_{3,(j+3)\bmod 5}, H_{3,(j+4)\bmod 5}, H_{4,(j+3)\bmod 5}, H_{5,(j+3)\bmod 5}\}$. 
    One complete solution (\ie{}, ICS) is obtained by choosing a value of $j, 1 \leq j \leq 5$.
    \Cref{subfig:classIIA-assI,subfig:classIIA-assII} show the solutions with $j = 1$ and $j = 2$, respectively.  
    As shown in \cref{fig:classIIA}, changing the index $j$ from $1$ to $2$, has the effect of translating the motif along the structure of the SBG. 
    By changing $j$ from $1$ through $5$ (\ie{}, translating the motif $5$ times), $5$ different ICSes can be constructed.
    The colors that are associated with the nodes of the SBG, if they are injected at the motif nodes, are shown in \cref{tab:color-assignment-after-seepage-for-Class-II-ICS}. 
    The first column of the table indicates the node and the second column provides the color assigned to that node. 
    For example, in row 3 of \cref{tab:color-assignment-after-seepage-for-Class-II-ICS}, the node $H_{2, (j+1) \bmod 5}$ receives the colors injected at motif nodes $P_{1,1}, P_{3, j}, P_{3, (j+1) \bmod 5}$ and is denoted by $P_{1,1}^cP_{3, j}^cP_{3, (j+1) \bmod 5}^c$. 
    It may be verified that every node of the SBG has a {\em color associated with it and no two nodes have the same color assignment}.

\begin{figure}[htb]
    \begin{center}
    \begin{subfigure}[c]{0.48\textwidth}
        \includegraphics[width=\textwidth, keepaspectratio]{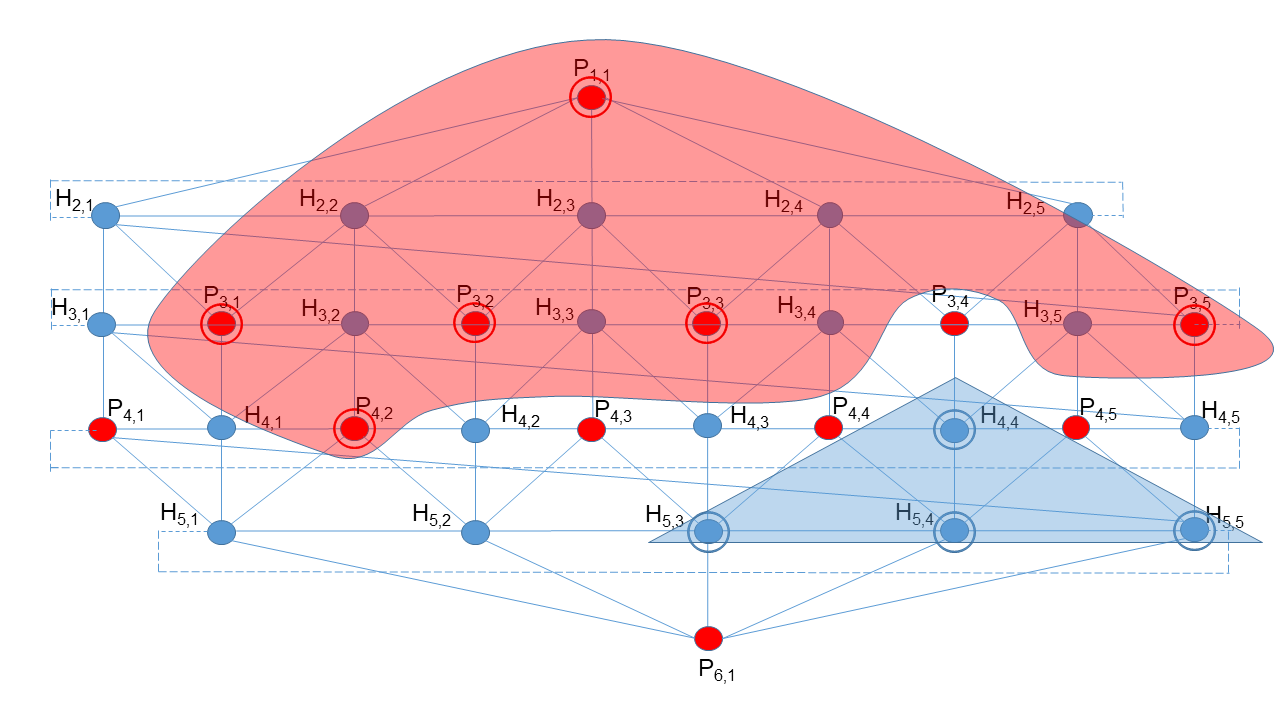}
        \caption{Class III-A Motif Assignment.}
        \label{subfig:class-IIIA}
    \end{subfigure}
    \hfill
    \begin{subfigure}[c]{0.48\textwidth}
        \includegraphics[width=\textwidth, keepaspectratio]{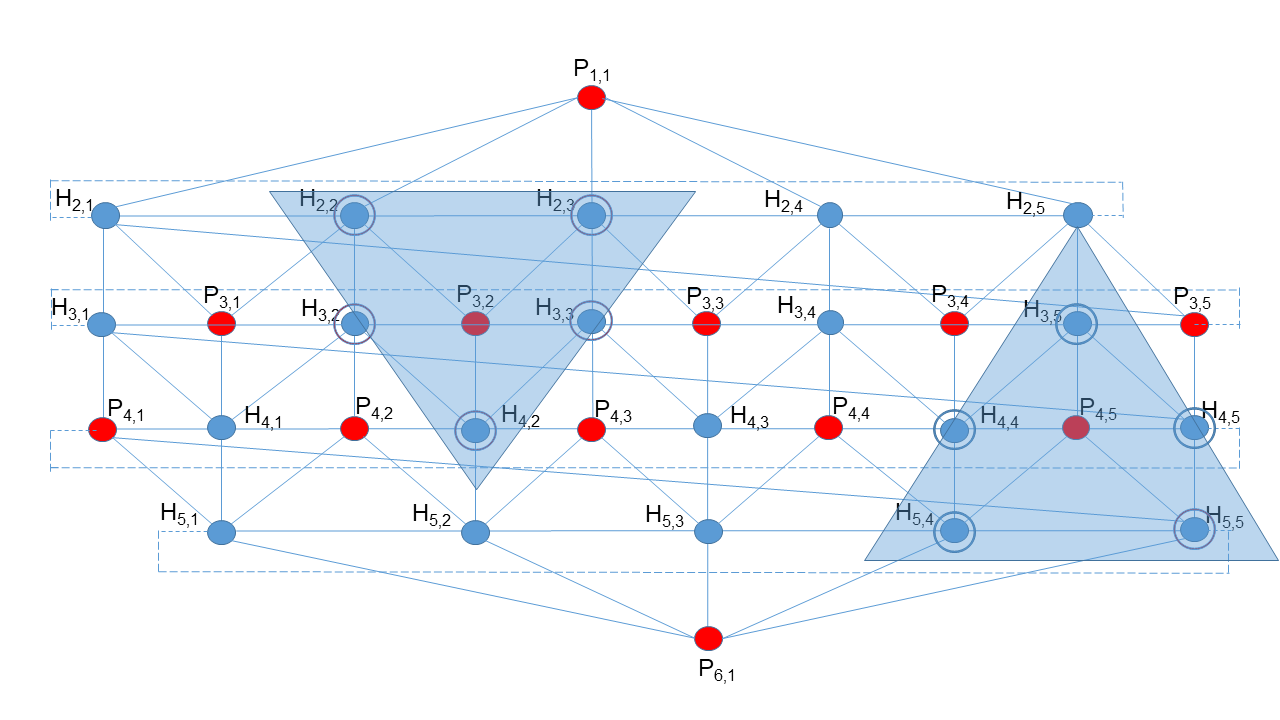}
        \caption{Class IV Motif Assignment.}
        \label{subfig:class-IV}
    \end{subfigure}
    \end{center}
    \caption{Examples of Color Assignments using Motifs IIIA and IV.}
    \label{fig:color-assignment-examples}
\end{figure}

\begin{table}[htb]
    \centering
    \caption{Color assignment at nodes after seepage for Class III ICS.}
    \label{tab:color-assignment-after-seepage-for-Class-III-ICS}
    \begin{tabular} { ll }  \toprule
        \textbf{Node} & \textbf{Color} \\ \midrule
        $P_{1, 1}$ & $P_{1,1}^c$ \\ 
        $H_{2, j}$ & $P_{1,1}^cP_{3,j}^cP_{3, (j+4) \bmod 5}^c$ \\  
        $H_{2, (j+1) \bmod 5}$ & $P_{1,1}^cP_{3,j}^cP_{3,(j+1) \bmod 5}^c$ \\ 
        $H_{2, (j+2) \bmod 5}$& $P_{1,1}^cP_{3,(j+1) \bmod 5}^cP_{3,(j+2) \bmod 5}^c$ \\ 
        $H_{2,(j+3) \bmod 5}$& $P_{1, 1}^cP_{3,(j+2) \bmod 5}^c$ \\ 
        $H_{2,(j+4) \bmod 5}$& $P_{1, 1}^cP_{3,(j+4) \bmod 5}^c$ \\ 	
        $H_{3, j}$& $P_{3, j}^cP_{3, (j+4) \bmod 5}^c$ \\ 
        $P_{3, j}$ & $P_{3, j}^c$ \\ 		
        $H_{3, (j+1) \bmod 5}$& $P_{3,j}^cP_{3,(j+1) \bmod 5}^cP_{4,(j+1) \bmod 5}^c$ \\ 
        $P_{3, (j+1) \bmod 5}$& $P_{3, (j+1) \bmod 5}^c$ \\ 		
        $H_{3, (j+2) \bmod 5}$& $P_{3, (j+1) \bmod 5}^cP_{3, (j+2) \bmod 5}^c$ \\ 
        $P_{3, (j+2) \bmod 5}$& $P_{3, (j+2) \bmod 5}^c$ \\ 
        $H_{3, (j+3) \bmod 5}$& $P_{3, (j+2) \bmod 5}^cH_{4, (j+3) \bmod 5}^c$ \\ 
        $P_{3, (j+3) \bmod 5}$& $H_{4, (j+3) \bmod 5}^c$ \\ 
        $H_{3, (j+4) \bmod 5}$& $H_{4, (j+3) \bmod 5}^cP_{3, (j+4) \bmod 5}^c$ \\ 
        $P_{3, (j+4) \bmod 5}$& $P_{3, (j+4) \bmod 5}^c$ \\ 
        $P_{4, j}$& $H_{5, (j+4) \bmod 5}^c$ \\ 
        $H_{4, j}$ & $P_{4, (j+1) \bmod 5}^cP_{3, j}^c$ \\ 
        $P_{4, (j+1) \bmod 5}$& $P_{4, (j+1) \bmod 5}^c$ \\ 
        $H_{4, (j+1) \bmod 5}$& $P_{4, (j+1) \bmod 5}^cP_{3, (j+1) \bmod 5}^c$ \\ 
        $P_{4, (j+2) \bmod 5}$& $H_{5, (j+2) \bmod 5}^c$ \\ 
        $H_{4, (j+2) \bmod 5}$& $P_{3, (j+2) \bmod 5}^cH_{5, (j+2) \bmod 5}^c$ \\  
        $P_{4, (j+3) \bmod 5}$& $H_{4, (j+3) \bmod 5}^cH_{5, (j+2) \bmod 5}^cH_{5, (j+3) \bmod 5}^c$ \\ 
        $H_{4, (j+3) \bmod 5}$& $H_{4, (j+3) \bmod 5}^cH_{5, (j+3) \bmod 5}^c$ \\ 	
        $P_{4, (j+4) \bmod 5}$& $H_{4, (j+3) \bmod 5}^cH_{5, (j+3) \bmod 5}^cH_{5, (j+4) \bmod 5}^c$ \\ 
        $H_{4, (j+4) \bmod 5}$& $P_{3, (j+4) \bmod 5}^cH_{5, (j+4) \bmod 5}^c$ \\ 
        $H_{5, j}$& $H_{5, (j+4) \bmod 5}^cP_{4, (j+1) \bmod 5}^c$ \\ 
        $H_{5, (j+1) \bmod 5}$& $H_{5, (j+2) \bmod 5}^cP_{4, (j+1) \bmod 5}^c$ \\ 
        $H_{5, (j+2) \bmod 5}$& $H_{5, (j+2) \bmod 5}^cH_{5, (j+3) \bmod 5}^c$ \\ 
        $H_{5, (j+3) \bmod 5}$& $H_{5, (j+3) \bmod 5}^c H_{5, (j+4) \bmod 5}^c H_{5, (j+2) \bmod 5}^c H_{4, (j+3) \bmod 5}^c$ \\  
        $H_{5, (j+4) \bmod 5}$& $H_{5, (j+4) \bmod 5}^cH_{5, (j+3) \bmod 5}^c$ \\   
        $P_{6, 1}$& $H_{5, (j+2) \bmod 5}^cH_{5, (j+3) \bmod 5}^cH_{5, (j+4) \bmod 5}^c$ \\ \bottomrule
    \end{tabular}
\end{table}

    \item[Class III] As in Class II, the Class III ICS is made up of six nodes of degree five and four nodes of degree six. 
    Like Class II, this Class also can be subdivided into two sub-classes and we will refer to them as Class III-A and Class III-B, respectively. 
    We restrict our discussion on Class III-A, as color injection for Class III-B can be obtained as a mirror image of Class III-A.
    It will be clear from further discussion that, as in Class II, the Class III solutions also comprise one $P$-type motif and one $H$-type motif and they complement each other to produce a solution together. 
    As in Class II, the motif pairs can be translated along the structure of the SBG to produce a set of five solutions that make up the Class III-A. 
    The five solutions that make up the Class III-B can be constructed in a similar fashion.
    
    \item[Class IV] \sloppy The $P$-type motif is made up of six nodes: $\{P_{1,1}, P_{3,j}, P_{3,(j+1)\bmod 5}, P_{3,(j+2) \bmod 5}, P_{3,(j+4)\bmod 5},\allowbreak  P_{4,(j+1)\bmod 5}\}$. 
    As shown in \cref{subfig:classIIA-assI}, changing the index $j$ from 1 to 5, has the effect of translating the motif along the structure of the SBG. 
    The $H$-type motif that complements the P-type motif is made up of the set of four nodes $\{H_{4,(j+3)\bmod 5}, H_{5,(j+2)\bmod 5}, H_{5,(j+3)\bmod 5}, H_{5,(j+4)\bmod 5}\}$. 
    One complete solution is obtained by choosing a value of $j, 1 \leq j \leq 5$. 
    By moving the $P$-type and $H$-type motifs in tandem by changing the value of the index from $1$ to $5$, five different solutions can be obtained.
    The colors that will be associated with the nodes of the SBG, if the they are injected at the motif nodes, are shown in \cref{tab:color-assignment-after-seepage-for-Class-III-ICS}. 
    It may be verified that every node of the SBG has a {\em color associated with it and no two nodes have the same color assignment}.
    
    \item[Class IV] As in Class I, the Class IV ICS are made up of 10 nodes of degree six (\ie{}, the nodes corresponding to hexagonal patches). This Class comprises of five ICS and cannot be subdivided like in Classes II and III.
    This class comprises two H-type motifs made up of five hexagonal nodes each. 
    The first motif comprises $\{H_{2, (j+1) \bmod 5}, H_{2, (j+2) \bmod 5}, H_{3, (j+1) \bmod 5}, H_{3, (j+2) \bmod 5}, H_{4, (j+1) \bmod 5}\}$. 
    The other motif comprises $\{H_{3, (j+4) \bmod 5}, H_{4, (j+3) \bmod 5}, H_{4, (j+4) \bmod 5}, H_{5, (j+3) \bmod 5}, H_{5, (j+4) \bmod 5}\}$. 
    As shown in \cref{subfig:classIIA-assII}, changing the index $j$ from $1$ to $5$, has the effect of translating the motif along the structure of the SBG. 
    One complete solution is obtained by choosing a value of $j, 1 \leq j \leq 5$. 
    By moving two H-type motifs in tandem, changing the value of the index from $1$ to $5$, five different solutions can be obtained. 
    As in Classes II and III, if colors are injected at the motif nodes, then every node of the SBG will have a {\em color associated with it and no two nodes will have the same color assignment}. 
\end{description}
This concludes proof of \cref{thrm:at-least-26}.
\end{proof}

\section{Combinatorial Lower Bound of MICS of SBG}
\label{sec:combinatorial-lower-bound}

In \cref{fig:SBG}, we have provided a {\em layered} representation of the SBG, where 32 nodes of the SBG is placed in six layers, indicated by L1 through L6. The layers L1 through L3 constitute the {\em top half} of the SBG and the layers L4 through L6 constitute the {\em bottom half}. As the two halves are symmetric, similar argument can be applied to both of them.

\begin{lemma}
\label{lem:4bound}
A MICS must select at least $4$ nodes from each half. In other words, at least $4$ distinct colors need to be injected in each half.
\end{lemma}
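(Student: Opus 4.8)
The plan is to reduce the claim to one half by the symmetry remarked just before the lemma, and then to a clean separation problem on a $5$-cycle. It suffices to prove that any ICS $V'$ contains at least $4$ nodes of the top half $T := L1 \cup L2 \cup L3$. I would argue by contradiction, assuming $|V' \cap T| \le 3$, and concentrate on the six nodes $P_{1,1}, H_{2,1}, \dots, H_{2,5}$ of $L1 \cup L2$. The crucial first observation is that, reading off $E_1, E_3, E_8, E_9, E_{10}$ from \cref{tab:colorassignment}, the closed neighbourhood $N^+$ of each of these six nodes lies entirely inside $T$. Hence their signatures $N^+(v) \cap V'$ depend only on $S := V' \cap T$, and the ICS property forces these six signatures to be pairwise distinct \emph{regardless} of what is selected in the bottom half.

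Next I would translate distinctness of the six signatures into two covering conditions on the $5$-cycle $H_{2,1}, \dots, H_{2,5}$. Since $P_{1,1} \in N^+(v)$ for all six nodes, selecting $P_{1,1}$ adds the same element to every signature and is useless for separation, so I may assume $P_{1,1} \notin S$. Then I would check, per node type, that each selected $s \in S$ is seen by a set of the $H_{2,j}$ forming a \emph{contiguous arc} of length $1$ or $2$ on the cycle: an $H_{3,\ell}$ is seen only by $H_{2,\ell}$; a $P_{3,\ell}$ only by $H_{2,\ell}$ and $H_{2,(\ell+1)\bmod 5}$; and an $H_{2,\ell}$ splits off from $P_{1,1}$ exactly the two $H_{2,j}$ it does \emph{not} dominate, namely $H_{2,(\ell+2)\bmod 5}$ and $H_{2,(\ell+3)\bmod 5}$, again a length-$2$ arc. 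Moreover the adjacent pairs $\{H_{2,j}, H_{2,(j+1)\bmod 5}\}$ that $s$ separates are precisely the two pairs \emph{flanking} this arc. Thus separating $P_{1,1}$ from every $H_{2,j}$ is exactly the condition that the chosen arcs cover all five $H_{2,j}$ (call it $\mathrm{S}$), while separating each consecutive pair is the condition that the flanking gaps cover all five gaps of the cycle (call it $\mathrm{B}$).

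The combinatorial heart is then a small lemma: on a $5$-cycle, at most three arcs, each of length $1$ or $2$, that together cover all five vertices must leave at least one gap unflanked. I would prove it by noting that covering forces the total arc length to be $5$ or $6$. In the length-$5$ case the three arcs partition the cycle, so the two gaps interior to the length-$2$ arcs flank nobody; in the length-$6$ case exactly one vertex is doubly covered and the gap interior to the one remaining length-$2$ arc is again unflanked. Either way $\mathrm{B}$ fails while $\mathrm{S}$ holds (and if $\mathrm{S}$ fails, some $H_{2,j}$ already shares $P_{1,1}$'s signature). In all cases two of the six signatures coincide, contradicting the ICS property; hence $|V' \cap T| \ge 4$, and symmetrically the bottom half contributes at least $4$.

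I expect the main obstacle to be the second step: verifying uniformly, across the four types $P_{1,1}, H_{2,\ell}, H_{3,\ell}, P_{3,\ell}$, that the set of $H_{2,j}$ separated from $P_{1,1}$ is always a contiguous arc and that the pairs it separates are exactly that arc's flanking gaps. This rests on a careful reading of which edges in \cref{tab:colorassignment} attach each $L3$ node to the $H_{2,j}$, and it is where an off-by-one in the $(\cdot)\bmod 5$ indexing would destroy the clean arc picture. Once this correspondence is pinned down, the arc-covering lemma and the final contradiction follow immediately.
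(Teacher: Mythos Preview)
Your proposal is correct but follows a genuinely different route from the paper. Both arguments open with the same key observation—that the six signatures in $L1 \cup L2$ depend only on $V' \cap T$, since no bottom-half node is adjacent to $L1$ or $L2$—and both then aim to show that three top-half selections cannot make these six signatures pairwise distinct. From there the paper proceeds by a pigeonhole count: three injected colors give seven non-empty subsets of $\{A,B,C\}$, so six distinct signatures leave exactly one subset unused and force some letter to appear exactly three times among the $L1 \cup L2$ signatures; a four-way case split on where that letter is injected ($P_{1,1}$, an $L2$ node, a hexagonal $L3$ node, a pentagonal $L3$ node) shows the appearance count is $6$, $4$, $1$, or $2$, never $3$. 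You instead translate the separation requirements on $\{P_{1,1}\} \cup L2$ into an arc-covering problem on the $5$-cycle of the $H_{2,j}$ and prove the clean lemma that three arcs of length at most two cannot simultaneously cover all five vertices and flank all five gaps. Your route is more structural, names the colliding pair explicitly, and handles the degenerate cases ($P_{1,1} \in S$, or fewer than three selections) uniformly; the paper's route is shorter and avoids any auxiliary lemma. Both ultimately rest on exactly the adjacency data from \cref{tab:colorassignment} that you rightly flag as the step demanding care.
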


\begin{proof}
We provide arguments for the top half of the SBG, and we first show that three distinct colors are necessary to ensure that each node in top half receives a distinct color (either through injection, seepage or combination of the two). Consider layers  L1 and L2. No matter which nodes in bottom half are injected with colors, these colors will not seep into the nodes in L1 and L2 (colors seep only to adjacent nodes), coloring in bottom half nodes will not affect the colors associated with nodes in L1 or L2. Since L1 and L2 have six nodes, six distinct colors need to be associated with them.
It can be easily verified that in the SBG, in order to ensure distinct colors to each one of the six nodes in L1 and L2 at least three colors must be injected to three nodes in the top half of the SBG.  Clearly at least three nodes must be selected from each half so that nodes in L1, L2, L5 and L6 receive distinct colors. With injection of three colors, up to $2^3 - 1 = 7$ colors (excluding an empty combination) can be generated.

Next we show that three colors are not sufficient to color L1 and L2. Without loss of generality, we use alphabets $\{A,B,C\}$ to represent three colors. As mentioned above, $7$ distinct colors can be generated with these three colors (three primary and four composite), $\{A, B, C, AB, AC, BC, ABC\}$. Simple counting shows that each alphabet (color) appears exactly $4$ times. Suppose there is a proper injection using $A, B, C$ that ensures all nodes in L1 and L2 received distinct colors. Since seven distinct colors can be generated with three primary colors, and L1 and L2 has only six nodes, it implies that one of the seven colors (primary or composite) is not used while coloring the nodes of L1 and L2. This implies that at least one of the alphabets $A, B, C$ is appearing three times instead of four in the alphabet strings (representing colors) associated with the nodes of L1 and L2. Without loss of generality, {\em we assume that color $A$ is appearing 3 times}. There are four possible locations for injection of color $A$ in the top half of the SBG. In the following, we examine them all.

\begin{enumerate}
  \item \emph{$A$ is injected on L1, \ie{}, at $P_{1,1}$. $A$ would then appear at all nodes in L1 and L2, making its appearance six times, contradicting the assumption. 
  \item $A$ is injected on L2, \ie{}, one of $H_{2,i} (1 \leq i \leq 5$) nodes. Thus, $A$ appears four times (three nodes in L2 and one node in L1) contradicting the assumption. 
  \item $A$ is injected on one of the hexagonal nodes L3, \ie{}, one of the $H_{3,i}, 1 \leq i \leq 5$. Since $H_{3,i}$  has only one neighbor in on L1 and L2 ($H_{2,i}$), in this case $A$ will appear only on one node in L2, making its appearance one time, contradicting the assumption. 
  \item $A$ is injected on one of the pentagonal nodes L3, \ie{}, one of the $P_{3,i}, 1 \leq i \leq 5$. Since $P_{3,i}$ has only two neighbors in on L1 and L2 ($H_{2,(i-1)\mod 5}$ and $H_{2,i}$), in this case $A$ will appear only on two nodes in L2, making its appearance two times, contradicting the assumption.}
\end{enumerate}

As there is no location for injection of $A$, we can conclude that $3$ colors are inadequate to ensure that all nodes in L1 and L2 receive a unique color. Similar arguments can be made for coloring of nodes in L5 and L6. Therefore the lower bound of MICS for the SBG must be at least $4 + 4 = 8$.
\end{proof}

\begin{lemma}
\label{lem:at-least-9}
MICS of the SBG is at least 9.
\end{lemma}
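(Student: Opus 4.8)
The plan is to leverage \cref{lem:4bound}, which already shows that a MICS contains at least four nodes in each half and hence has size at least $8$. Any code of size exactly $8$ must therefore place \emph{exactly} four nodes in the top half and four in the bottom half, so it suffices to rule out this $4+4$ split: doing so forces a fifth node into one half and yields the bound of $9$. Throughout, I would exploit the two automorphisms evident from \cref{tab:colorassignment}, namely the $5$-fold rotation $j\mapsto (j+1)\bmod 5$ and the top--bottom symmetry already used in the proof of \cref{lem:4bound}, to keep the number of cases manageable.

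First I would sharpen the counting behind \cref{lem:4bound}. The six nodes of L1 and L2 have no neighbours in the bottom half, so the four chosen top nodes must by themselves give these six nodes pairwise distinct labels $N^+(v)\cap V'$. From the closed neighbourhoods one reads off that an injection at $P_{1,1}$ reaches all six of them, at an $H_{2,\cdot}$ node reaches four, at a $P_{3,\cdot}$ node reaches two, and at an $H_{3,\cdot}$ node reaches only one. Fixing first the incidence pattern of $P_{1,1}$ and the $H_{2,\cdot}$ nodes (the only ones that reach $P_{1,1}$), I would then enumerate, up to the rotational symmetry, the (few) placements of four top nodes that separate L1 and L2.

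The core of the argument is to track what each such placement does to the ten nodes of L3. Since the four top injections are essentially ``used up'' separating the six outer nodes, I expect that every admissible placement leaves at least one pair of L3 nodes with the \emph{same} top-contribution to its label; for four consecutive $H_{2,\cdot}$ injections, for example, two of the $H_{3,\cdot}$ nodes each collide with an adjacent $P_{3,\cdot}$ node. Such a pair can be separated only by \emph{differing seepage from the bottom half}, \ie{} by an appropriately placed node of L4. For every top placement I would record exactly which bottom injections can break each such tie. By the top--bottom symmetry, the four bottom nodes are simultaneously forced to separate L5 and L6 and therefore generate their own unbreakable ties among the L4 nodes, which in turn would need separating seepage coming \emph{down} from L3. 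The contradiction then follows from the observation that injections placed to distinguish the outer shell do not reach the opposite middle layer: top nodes chosen in L2 send nothing into L4, and symmetrically bottom nodes chosen in L5 send nothing into L3, so the forced ties in L3 and L4 cannot all be resolved with only $4+4$ injected nodes.

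The step I expect to be the main obstacle is exactly this coupling of the two halves through the middle layers L3 and L4. A collision among L3 nodes is not an assertion about the top half alone, because L3 labels also depend on bottom-half seepage; hence the analysis cannot be localised to a single half, and the bookkeeping of which bottom node resolves which L3 pair has to be played off against the competing demands arising in L4. Managing this interaction --- by reducing the admissible top/bottom placement pairs via the symmetries and then arguing by a counting bound that the available cross-layer seepage is too scarce to resolve all forced ties --- is where the real effort lies, and it is what would let me conclude that the MICS of the SBG is at least $9$.
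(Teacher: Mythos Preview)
Your plan is a genuinely different route from the paper's proof. The paper does not enumerate top/bottom placements at all; instead it uses a double-counting argument on what it calls the \emph{total string length}, \ie{} $\sum_{v\in V}\abs{N^+(v)\cap V'}$. With eight injected colours, at most eight nodes can have a length-$1$ label, so the total is at least $8\cdot 1+24\cdot 2=56$; conversely, each injection at a hexagonal (resp.\ pentagonal) node contributes $7$ (resp.\ $6$) to the sum, so the total is at most $7\cdot 8=56$, with equality forcing all eight injections onto hexagonal nodes. A short local argument on L2 then rules out any valid all-hexagonal injection of eight colours. This sidesteps the cross-layer bookkeeping entirely: the global counting already pins the configuration down to a single structural shape, which is then dispatched in two small cases.

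Your strategy could in principle be pushed through, but the outline as written has a real gap at the step you yourself flag. The sentence ``top nodes chosen in L2 send nothing into L4, and symmetrically bottom nodes chosen in L5 send nothing into L3'' is true, but it does not give the contradiction you want: nothing so far forces the four top injections to lie in L1\,$\cup$\,L2. You explicitly allow injections at $P_{3,\cdot}$ and $H_{3,\cdot}$ in your enumeration of reach values, and any such L3 injection \emph{does} seep into L4 and can resolve an L4 collision --- while simultaneously a bottom injection sitting in L4 can resolve an L3 collision. So the ``forced ties in L3 and L4 cannot all be resolved'' conclusion does not follow from the observation you cite; you would need a separate argument bounding how many cross-layer ties a single L3 (or L4) injection can break while still contributing to the separation of L1\,$\cup$\,L2 (resp.\ L5\,$\cup$\,L6). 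That is exactly the interaction you identify as the main obstacle, and the proposal does not yet contain an idea for controlling it. The paper's string-length trick is what lets one avoid this case analysis altogether.
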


\begin{proof}
In GCS problem (which is equivalent to the MICS problem), each node is assigned a color, which may be a primary or a composite color. A primary color is indicated by one alphabet and a composite color by a {\em string} of alphabets. The number of alphabets that appear in a string determines the {\em length} of that string. We establish the lemma by providing arguments based on the sum of the length of strings associated with each one of the 32 nodes of the SBG. We will refer to the sum of the length of strings associated with each one of the 32 nodes of the SBG as ``total string length''.

We use the term ``valid injection'' to imply an injection of colors to the nodes that ensures that all 32 nodes of the SBG receive a distinct color. Suppose there exists more than one valid injections using eight colors. Among the set of all valid injections, we consider the one whose total string length is minimum. The lower bound of the total string length for a valid injection with eight colors is 56. This is true, as with eight injected colors, at most eight nodes of the SBG can have associated strings of length one, and the remaining 24 nodes must have strings of length at least two. Thus the lower bound on the string length must be $8 \times 1 + 24 \times 2 = 56$.
The upper bound on the total string length with injection of at most eight colors is also 56. This is true for the following reason.
If a color is injected on a hexagonal node, then it will appear seven times (six neighbors and the node itself). Similarly for a pentagonal node, the color will appear six times. Therefore, the upper bound of total string length is $7 \times 8 = 56$.
It may be noted that the total string length is $56$ if and only if all colors are injected on hexagonal nodes. 
However, it is impossible to achieve a valid injection by injecting eight colors only on hexagonal nodes. Consider the top half of the SBG. In order to color nodes on L1, at least one color, say $A$, must be injected on one node on L2. Without loss of generality, we assume that $A$ is injected on $H_{2,i}, 1 \leq i \leq 5$. We consider two scenarios:

\begin{enumerate}
  \item No other color is injected at the nodes on L2. In this case, the other colors are injected at three hexagonal nodes on L3. Because of injection of $A$ at $H_{2, i}$, after seepage, all six adjacent nodes, $P_{1, 1}$, $H_{2, (i - 1)\mod 5}$, $H_{2, (i + 1) \mod{5}}$, $H_{3, (i - 1)\mod 5}$, $P_{3, i}$, $H_{3, i}$, will have color $A$.  In order ensure that all these nodes have distinct colors, colors must be injected on $H_{3, (i - 1)\mod 5}$, $H_{3, i}$, $H_{3, (i + 1)\mod 5}$.  However, if such an injection is made, the nodes $H_{2, (i + 2)\mod 5}$ and $H_{2, (i + 3)\mod 5}$ will not receive any color. Accordingly, such an injection will not be a valid injection. 
	 \item One or more colors are injected at the nodes on L2. Suppose a different color $B$ is injected at a node different from $H_{2,i}$ (where color $A$ is injected). Due to the SBG topology, no matter which node on L2 is injected with $B$,  one node on L2 and the node on L1 must have color $AB$ after seepage. In order to ensure distinct colors on these two nodes, a third color $C$ must be injected on another node. After injection of $C$, one of the two nodes that had the color $AB$ before injection of $C$, will have the color $AB$ and the other will have $ABC$. However, if one node has a string of length three, the lower bound of the total string length can longer be 56. It has to be at least 57, thus exceeding the upper bound (56), that is possible with injection of at most eight colors.
\end{enumerate}
\end{proof}

\begin{theorem}
The lower bound of MICS of the SBG is at least nine. In other words, eight colors are insufficient to ensure that all nodes of the the SBG receives a distinct color.
\end{theorem}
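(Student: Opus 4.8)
The plan is to present this theorem as an immediate corollary of the machinery already built in \cref{lem:4bound,lem:at-least-9}, since the statement ``the lower bound of MICS of the SBG is at least nine'' is verbatim the conclusion of \cref{lem:at-least-9}. Rephrasing the claim in its contrapositive form---``eight colors are insufficient to give all $32$ nodes distinct colors''---I would simply invoke \cref{lem:at-least-9}, which has already shown that no \emph{valid injection} of eight colors exists. Thus the body of the proof is essentially one sentence, and the real task is to package the preceding lemmas cleanly rather than to discover a new argument.

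To make that appeal convincing I would briefly recall the two-step ascent to the bound. \Cref{lem:4bound} first fixes the weaker value eight through a half-symmetry counting argument: colors injected in one half cannot seep into L1 and L2 of the other half, so the six nodes there must be separated using injections confined to that half, and a four-way case analysis on where an under-represented color could be injected shows that three colors never suffice---forcing at least four injections per half, hence at least eight overall. This sets up, but does not yet deliver, the target bound of nine.

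The genuine work---and the step I expect to be the main obstacle, should one insist on a self-contained proof rather than a citation---is the sharpening from eight to nine in \cref{lem:at-least-9}. Its engine is a \emph{total string length} budget: eight injected colors pin the total length to the single value $8 \times 1 + 24 \times 2 = 56$, which is attainable only when all eight injections land on hexagonal (degree-six) nodes; a topological case split on the top half then shows that an all-hexagonal eight-color injection can never be valid, since it either leaves two L2 nodes uncolored or creates a length-three string that pushes the total above $56$. Reading off the contrapositive of this impossibility is exactly the theorem, so the only care needed in the theorem's own proof is to avoid re-deriving that argument and instead point directly at \cref{lem:at-least-9}.
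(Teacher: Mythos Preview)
Your proposal is correct and matches the paper's approach exactly: the paper's own proof is the single line ``Follows from \cref{lem:4bound,lem:at-least-9}.'' Your summary of how those lemmas work is accurate, though more expansive than what the paper itself provides at this point.
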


\begin{proof}
Follows from \cref{lem:4bound,lem:at-least-9}.
\end{proof}
\FloatBarrier
\section{Machine-Verifiable Proofs of Unsatisfiability}
\label{sec:machine-verifiable-proofs}

We can prove that the strict lower bound of MICS of SBG is ten, by using a {\em machine-verifiable proof of unsatisfiability} of sets of {\em pseudo-Boolean constraints} (`PB constraints', for short).
In this section, we first review some basic concepts and notation of pseudo-Boolean satisfiability. 
Then, we give a brief introduction to, and motivation for the use of, machine-verifiable proofs of unsatisfiability.
We then describe the basic principles of {\em cutting-planes}, a proof system for creating proofs of unsatisfiability of PB formulae, and briefly describe a way to encode those proofs, such that they can be read by an automated verifier.
In \cref{sec:pb-proof}, we describe the steps and the tools that we used to obtain a proof that the cardinality of the MICS of SBG is at least ten.

\subsection{Pseudo-Boolean Satisfiability}

For the formal proof part of this work, we will use {\em pseudo-Boolean formulae (PB formulae)} to model a 
We briefly recall some concepts and notation of pseudo-Boolean Satisfiability.
For a more detailed overview, we refer the reader to, \eg{}, Chapter 28 of the {\em Handbook of Satisfiability}~\cite{RM21,BMH+21}.
 
We denote a set of Boolean variables with the capital letter $X$ and denote individual Boolean variables with lowercase letters $x \in X$.
We denote truth values with $\true$ ({\em true}) and $\false$ ({\em false}).
A {\em literal} $\ell$ is a variable or its negation (\eg{}, $x$ or $\overline{x}$, respectively), where $\overline{x} = 1 - x$.
For simplicity, we write $\overline{\overline{x}} = x$.
We also note that $\overline{\true} = \false$ and $\overline{\false} = \true$.

In this work, we only consider {\em linear} pseudo-Boolean (PB) constraints.
A linear PB constraint on Boolean variables $X$ has, or can trivially be rewritten to have, the following form:
\begin{equation}
	C := \sum_i a_i \ell_i \bowtie b,
\end{equation}
where $a_i, b \in \mathbb{Z}$, and $\bowtie \: \in \left\{=, \leq, \geq, <, >\right\}$.
The $a_i$s are called the {\em coefficients} of the constraint.
The right-hand side $b$ is called the {\em degree} of the PB constraint.
We call the set of variables that are present in a PB constraint $C$ the {\em support} of $C$, denoted as $\support{C}$.

A {\em full assignment} $\assignment : X \mapsto \{0,1\}$ assigns a truth value to each variable in $X$. 
We denote the truth value that $\assignment$ assigns to a variable $x \in X$ by $\assignment(x)$.
We can {\em evaluate} a PB constraint to see if it is {\em satisfied} by an assignment $\assignment$.
This is done by substituting the literals in $C$ for their values in $\assignment$: $\assignment(C) := C\left(\left\{x \mapsto \assignment(x) \mid x \in \support{C} \right\}\right)$, evaluating the left-hand side and the right-hand side, and checking that they yield integers that satisfy the {\em relational operator} $\bowtie$.

A {\em PB formula} $\pbformula(X)$ is a set of PB constraints on Boolean variables $X$. 
We say that $\pbformula(X)$ is {\em satisfiable} iff there exists an assignment $\assignment$ that satisfies each constraint $C \in \pbformula$.
If no such $\assignment$ exists, we say that $\pbformula$ is {\em unsatisfiable}.

The process of showing that a PB formula is unsatisfiable, is called {\em refuting}, and a proof of unsatisfiability is called a {\em refutation}.
A solver that solves the decision problem of determining whether a PB formula is satisfiable, is called a {\em PB solver}.

\subsection{Verifying Unsatisfiability}
\label{subsec:verifying-unsat}

Given a candidate solution $\assignment$ to a PB formula $\pbformula(X)$, it is straightforward to verify that $\assignment$ is indeed a solution to $\pbformula(X)$.
All that it requires, is to substitute every variable in $\pbformula(X)$ by the truth value that is assigned to it by $\assignment$, and check that each constraint $C \in \pbformula$ is satisfied by $\assignment$.
However, it is less trivial to verify that a PB formula $\pbformula(X)$ is {\em unsatisfiable} and thus has no solutions.
If a PB solver returns ``unsatisfiable'', our confidence in the correctness of that conclusion is therefore based on the solver's reputation, \eg{}, on who developed the solver or on how thoroughly it was tested.

Since PB solvers can be complex tools, they may contain bugs or even implementation mistakes due to conceptual errors.
Hence, for applications where the stakes are high, a solver's reputation may not be enough to ensure the social acceptability of the refutations that it produces.

This does not only hold for PB solvers, but also for other solvers, like Boolean satisfiability (SAT) solvers.
For the last two decades, the Boolean Satisfiability community has therefore been investigating the practices of {\em proof logging} and {\em proof verification}~\cite{GN03,ZM03,Van08,HHW13a,HHW13b,WHH14}.
Since 2013, it has become common practice for SAT solvers to produce a {\em certificate of unsatisfiability} or {\em proof log} along with their unsatisfiability results.
This (potentially very large) file contains a log of the reasoning steps that justify the solver's conclusion.
SAT solvers participating in Boolean Satisfiability competitions~\cite{FHI+21} are required to produce certificates of unsatisfiability in the standardised \drat{} (``deletion resolution asymmetric tautology'') format~\cite{HHW13a,HHW13b,WHH14}.

Recently, the PB solving community followed the example set by the SAT solving community of requiring solvers to provide refutation proofs.
Modern PB solvers, like \roundingsat{}~\cite{EN18,Dev20,DGN21,DGD+21}, produce a certificate of unsatisfiability when given an unsatisfiable PB formula $\pbformula$.
This certificate consists of a list of cutting-planes inferences~\cite{Gom58}, where new PB constraints are derived from existing PB constraints, until the constraint $0 \geq 1$ (a contradiction) is derived.
A {\em proof verifier}, like \veripb{}~\cite{GMM+20,GMNO22,BGMN22,EGMN20,GMN20,GN21}, then takes as input the original PB formula $\pbformula$ and the certificate \texttt{cert}, and checks that all the inferences in the certificate are correct.
If the verifier concludes that they are, it has successfully verified the correctness of the proof of unsatisfiability.

PB researchers have not only developed technology for certifying the unsatisfiability of sets of PB constraints~\cite{VEG+18,GMM+20}, but also worked on techniques for certifying the translation of PB constraints into Boolean formulae~\cite{GMNO22}, and on certifying the optimality of solutions of PB optimisation problems~\cite{BGMN22}.
In this work, however, we focus on refutation proofs of sets of linear PB constraints.

Note that verifiers do not provide proof that the solver operates correctly on all inputs, or that it is bug-free.
All that a verifier can do, is check that for a given instance of an unsatisfiable formula, the solver made no mistakes in refuting the satisfiability of that instance.
Hence, if the solver correctly identifies an unsatisfiable instance as such, but does so using flawed reasoning, the verifier still rejects the certificate.
This makes the verification of refutation certificates a useful tool in detecting (implementation) bugs in solvers~\cite{FHR22}.

Crucially, verifiers are much simpler tools than SAT solvers or PB solvers. 
Hence, they are much easier to debug and check manually, which gives us confidence in their correctness.
Moreover, because verifiers are relatively simple (and therefore small) programs, it is possible to {\em formally verify} their correctness~\cite{Lam20}, using formally defined languages like proof assistants \acltwo{}, \coq{}, \holfour{}, or \isabelle{}.\footnote{Available at  \url{https://www.cs.utexas.edu/users/moore/acl2}, \url{https://coq.inria.fr}, \url{https://hol-theorem-prover.org/}, and \url{https://isabelle.in.tum.de}.}

\subsection{Proving Mathematical Conjectures with Unsatisfiability Proofs}
\label{subsec:proving-mathematical-conjectures}

Machine-verifiable proof technology gave rise to a line of research in which mathematical conjectures were proved using SAT solvers.
Notable examples include a proof that Schur Number Five is $S(5) = 160$ (a centuries-old open problem)~\cite{Heu18}, and a $200$ terabyte proof of the solution to the related Boolean Pythagorean triples problem~\cite{Heu17}, which took $40\:000$ CPU hours of computation to generate, including verification.

The recipe for proving a conjecture is the following. 
First, encode the {\em negation} of the conjecture.
This encoding can be a propositional formula, for example in {\em conjunctive normal form (CNF)}, or a PB formula, or even a sentence in a more expressive constraint specification language.
Then, let an appropriate solver refute the satisfiability of the encoded negated conjecture.
Finally, let a verification tool validate the unsatisfiability certificate.

In order for these machine-generated proofs to be socially accepted, we must convince ourselves of the following:
\begin{enumerate}
	\item The encoding of the conjecture is correct.
	\item The implementation of that encoding is correct.
	\item The verifier is correct.
\end{enumerate}

The first item on this list is impossible to formally guarantee.
The best we can do, is to make the encoding as simple and as straightforward as possible, so we can convince the reader that it is correct.
Similarly, the second item relies on implementing the encoding in a succinct and clear way, so the reader can convince themselves of the correctness of the implementation.
Finally, we must be convinced that the verifier made no mistakes in generating the proof, which we discussed at the end of \cref{subsec:verifying-unsat}.

\subsection{Cutting-Planes Proofs of Unsatisfiability}
\label{subsec:cutting-planes}

Proofs of unsatisfiability of sets of PB constraints all use a set of {\em inference rules} to derive new constraints from existing constraints, until the contradictory constraint $0 \geq 1$ is derived.
The first rules are simply definitions and axioms:
\begin{align}
	&\inferrule{}{x \geq 0}\qquad &\text{lower bound} \label{axiom:lb}\\
	&\inferrule{}{-x \geq -1}\qquad &\text{upper bound} \label{axiom:ub}\\
	&\inferrule{}{\overline{x} = 1 - x}\qquad &\text{negation} \label{axiom:negation} \\
	&\inferrule{}{x \cdot x = x}\qquad &\text{idempotence} \label{axiom:idempotence}
\end{align}

The proof system for PB constraints is the {\em cutting-planes proof system}~\cite{Gom58}. 
It has the following three rules to derive new PB constraints (under the horizontal line) from existing constraints and constants (above the line):
\begin{align}
	&\inferrule{\sum_i a_i \ell_i \geq b \qquad \sum_i c_i \ell_i \geq d}{\sum_i (a_i + c_i)  \ell_i \geq b + d} \qquad &\text{addition} \label{rule:addition}\\
    &\inferrule{\sum_i a_i \ell_i \geq b}{\sum_i \alpha  a_i  \ell_i \geq \alpha  b}, \:\forall \alpha \in \mathbb{N}^+ \qquad &\text{multiplication} \label{rule:multiplication}\\
	&\inferrule{\sum_i a_i \ell_i \geq b}{\sum_i \ceil*{\frac{a_i}{\alpha}}  \ell_i \geq \ceil*{\frac{b}{\alpha}}},\:\forall \alpha > 0 \qquad &\text{division} \label{rule:division}
\end{align}
Anything above a horizontal line is called a {\em premise}, and everything below it a {\em conclusion}.

Other well-known rules include the {\em saturation rule}, the {\em (partial) weakening rule}, the {\em strengthening rule} and the {\em redundancy rule}. 
These rules can all be replaced by a series of applications of the four axioms and three rules described above, and hence function as syntactic sugar in PB proofs.
Hence, a detailed discussion of these rules is outside the scope of this work, and we refer the reader to the literature for more details~\cite{RM21}.

In practice, a proof of unsatisfiability has to be written and read by a computer.
This is done by indexing all the constraints in the original problem and the derived constraints.
Whenever a derivation rule is applied to one or more constraint(s), this is logged in the proof by creating a new line that labels the rule that is being applied, and references the index/indices of the constraint(s) it is applied on.
This helps the verifier to check if all these rules are applied correctly.

We now provide an example of an unsatisfiable set of PB constraints, a cutting-planes proof of the unsatisfiability of this set of constraints, and a machine-readable version of that proof.

\begin{example}
\label{ex:cutting-planes-refutation}
Consider the following PB formula, where we have labelled the constraints as lemmas \texttt{L1}--\texttt{L7}:
\begin{gather}
\begin{aligned}
    &\texttt{L1}:\quad x_1 + x_2 + x_3 \geq 1 \\
    &\texttt{L2}:\quad x_2 + 2 x_3 + 3 x_4 \geq 3 \\
    &\texttt{L3}:\quad 2 x_1 + x_3 + 2 x_4 \geq 2 \\
    &\texttt{L4}:\quad x_1 + x_2 + x_4 \geq 1 \\
    &\texttt{L5}:\quad x_1 + x_3 \geq 1 \\
    &\texttt{L6}:\quad x_2 + x_4 \geq 1 \\
    &\texttt{L7}:\quad \overline{x}_1 + \overline{x}_2 + \overline{x}_3 + \overline{x}_4 \geq 3
\end{aligned}
\label{eq:unsat-pb}
\end{gather}
\end{example}

In the following, we will derive new facts and number them as `inferences' \texttt{i8}--\texttt{i13}.
Note that the negation and upper bound axioms imply $\overline{x} \geq 0$ (to check, simply substitute $x = 1 - \overline{x}$ into the upper bound inequality and simplify).
We use the fact `$\overline{x} \geq 0$' in the derivation of the new facts, for the sake of brevity.

\begin{example}
\begin{figure}
    \centering
    \begin{subfigure}[t]{0.525\textwidth}
        \begin{align}
            &\inferrule{\texttt{L3} \quad \overline{x}_3 \geq 0}{\texttt{i8a}:\quad 2 x_1 + 2 x_4 \geq 2} \quad \text{(addition and negation)} \label{eq:deriv8a} \\
            &\inferrule{\texttt{i8a}}{\texttt{i8b}:\quad x_1 + x_4 \geq 1} \quad \text{(division by $2$)} \label{eq:deriv8b}\\
            &\inferrule{\texttt{i8b} \quad \texttt{L7}}{\texttt{i8}:\quad \overline{x}_2 +  \overline{x}_3 \geq 2} \quad \text{(addition)} \label{eq:deriv8}\\
            &\inferrule{\texttt{i8} \quad x_3 \geq 0}{\texttt{i9}:\quad \overline{x}_2 \geq 1} \quad \text{(addition)} \label{eq:deriv9} \\
            &\inferrule{\texttt{i8} \quad x_2 \geq 0}{\texttt{i10}:\quad \overline{x}_3 \geq 1} \quad \text{(addition)}\\
            &\inferrule{\texttt{L6} \quad \texttt{i9}}{\texttt{i11}:\quad x_4 \geq 1} \quad \text{(addition)}\\
            &\inferrule{\texttt{L5} \quad \texttt{i10}}{\texttt{i12}:\quad x_1 \geq 1} \quad \text{(addition)}\\
            &\inferrule{\texttt{L7} \quad \texttt{i12} \quad x_2 \geq 0 \quad x_3 \geq 0 \quad \texttt{i11}}{\texttt{i13}:\quad 0 \geq 1} \quad \text{(addition)}
        \end{align}
    \caption{Cutting-planes proof that the set of PB constraints in \cref{ex:cutting-planes-refutation} is unsatisfiable. The last derived fact is a contradiction.} 
    \label{subfig:cutting-planes-proof}
    \end{subfigure}
    \hfill
    \begin{subfigure}[t]{0.375\textwidth}
        \begin{lstlisting}[firstnumber=0]
pseudo-Boolean proof version 1.0
u >= 0 ;
l 1
l 2
l 3
l 4
l 5
l 6
l 7
p 8 4 ~x3 + 2 d + 0
p 9 x3 + 0
p 9 x2 + 0
p 7 10 + 0
p 6 11 + 0
p 8 13 + x2 + x3 + 12 + 0
c 14 0
        \end{lstlisting}
    \caption{\roundingsat{}'s proof of the unsatisfiability of \cref{eq:unsat-pb}. The line numbers represent the constraint IDs, which are referenced in lines 9--15. The last line states contradiction.} 
    \label{subfig:roundsat-proof}
    \end{subfigure}
    \caption{A proof that the set of PB constraints in \cref{eq:unsat-pb} is unsatisfiable.}
    \label{fig:enter-label}
\end{figure}

Consider the cutting-planes proof in \cref{subfig:cutting-planes-proof}, and the machine-readable version of the same proof in \cref{subfig:roundsat-proof}, produced by \roundingsat{}.
The line numbers in the latter represent the constraint IDs.
The line `\verb|u >= 0 ;|' has constraint ID \verb|1|, and indicates that all literals must have a value of at least $0$.
Hence, axiom \verb|L1| has constraint ID \verb|2|, and corresponds to the line `\verb|l 1|' in \cref{subfig:roundsat-proof}, where the `\verb|1|' in that line refers to the line number of that axiom in the input file.

Now consider the derivation steps \cref{eq:deriv8a,eq:deriv8b,eq:deriv8} in \cref{subfig:cutting-planes-proof}, which correspond to the derivation line number \verb|9| in \cref{subfig:roundsat-proof}.
For clarity, we have split it up in three derivation steps.
First, \cref{eq:deriv8a} takes lemma \verb|L3| and the negation of $x_3$ and performs the following addition: 
\begin{equation}
\begin{array}{*{10}{c}}
  ~ & 2 x_1 & + & ~ & ~ & x_3     & + & 2 x_4 & \geq & 3   \\
  + \qquad & ~     & ~ & ~ & ~ & 1-x_3 & ~ & ~     & \geq & 0   \\ \hline
  = \qquad & 2 x_1 & + & ~ & ~ & 1       & + & 2 x_4 & \geq & 3,   \\
\end{array}
\end{equation}
which simplifies to $2 x_1 + 2 x_4 \geq 2$, the derived constraint \verb|i8a|. 
The inference in \cref{eq:deriv9} then divides this constraint by $2$, according to the division rule in \cref{rule:division}, to obtain $x_1 + x_4 \geq 1$, which is the derived constraint \verb|i8b|.
The final derivation made to obtain the constraint with constraint ID \verb|9| (labelled \verb|i8| in \cref{subfig:cutting-planes-proof}), is the addition of \verb|i8b| and \verb|L7|.
We invite the reader to verify that the remaining derivation steps are indeed correct.
\end{example}

In the above derivation, we have merged addition, negation and simplification steps to keep the proof of manageable size.
In the final step, we have merged four addition steps into one.
Deriving $0 \geq 1$ in the final step proves that there exists no solution $\assignment : \{x_1, x_2, x_3, x_4\} \mapsto \{0,1\}$ that satisfies all PB constraints in \cref{eq:unsat-pb}.

\begin{example}
The above proof is logged as follows in the format accepted by \veripb{}:
The first seven facts are simply marked by their line numbers in the input file (\verb|l 1| to \verb|l 7|).
The lines of the proof are all in reverse Polish notation, and end in a `\verb|0|'.
For example, the line \verb|p 9 x3 + 0| corresponds to the derivation of \texttt{i9} in the example above.
In the computer-readable proof, more steps can be merged into one line.
For example, the first inference steps (which derive \texttt{i8a}, \texttt{i8b} and \texttt{i8} in the example above) are merged in the computer-readable proof to form line \texttt{p 8 4 ~x3 + 2 d + 0} in reversed Polish notation. 
\end{example}

A detailed discussion of the proof format is outside the scope of this work, and a description of this format can be found at \href{https://github.com/StephanGocht/VeriPB}{github.com/StephanGocht/VeriPB}.

\section{Machine-Generated and Machine-Verified Lower Bound of MICS of SBG}
\label{sec:pb-proof}

We now describe how we use machine-verifiable proofs to prove the following theorem:
\begin{theorem}
	The MICS of SBG is at least ten.
    \label{theorem:at-least-10}
\end{theorem}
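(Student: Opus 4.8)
The plan is to recast the statement ``the MICS of the SBG is at least ten'' as the \emph{unsatisfiability} of a single pseudo-Boolean formula $\pbformula$, and then discharge that unsatisfiability with a machine-generated cutting-planes refutation that is checked by \veripb{}. Since \cref{lem:at-least-9} already rules out code sets of cardinality eight or fewer, it would in principle suffice to refute the existence of an ICS of cardinality exactly nine; for robustness, however, I would encode the negation of the claim directly, namely the assertion that the SBG admits an ICS of cardinality \emph{at most} nine, and prove that this assertion has no satisfying assignment.

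Concretely, I would introduce one Boolean variable $x_v$ for every node $v \in V$, with the intended reading that $x_v = \true$ iff $v$ is placed in the code set $V^\prime$. The ICS condition of \cref{def:ICS} demands that, for every pair of distinct nodes $v, w$, the sets $N^+(v) \cap V^\prime$ and $N^+(w) \cap V^\prime$ differ; this holds exactly when some selected node lies in the symmetric difference of the two closed neighbourhoods, i.e.\ when $\left(N^+(v) \triangle N^+(w)\right) \cap V^\prime \neq \emptyset$. I would therefore add, for each of the $\binom{32}{2} = 496$ unordered pairs $\{v,w\}$, the \emph{separation constraint}
\begin{equation}
    \sum_{u \,\in\, N^+(v)\,\triangle\,N^+(w)} x_u \;\geq\; 1,
\end{equation}
together with the single cardinality constraint $\sum_{v \in V} x_v \leq 9$ that caps the size of the code set. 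A satisfying assignment of the resulting formula $\pbformula$ yields the set $V^\prime = \{v : x_v = \true\}$, which has $\abs{V^\prime} \leq 9$ and, by the separation constraints, distinguishes every pair of nodes; conversely any ICS of cardinality at most nine induces a satisfying assignment. Hence $\pbformula$ is satisfiable if and only if the SBG has such an ICS, and refuting $\pbformula$ establishes the theorem.

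With $\pbformula$ in hand, I would run the PB solver \roundingsat{} to produce a cutting-planes certificate that derives the contradictory constraint $0 \geq 1$ from the constraints of $\pbformula$ via the addition, multiplication, and division rules of \cref{subsec:cutting-planes}, and then hand both $\pbformula$ and the certificate to \veripb{} for independent verification. Combined with \cref{lem:at-least-9}, a successful verification proves that the MICS of the SBG is at least ten.

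The main obstacle is \emph{not} the refutation itself, which is delegated to the solver and the verifier, but the faithfulness of the encoding---precisely the one link in the chain that cannot be formally certified (see \cref{subsec:proving-mathematical-conjectures}). I would therefore keep the translation as transparent as possible and argue carefully that a node assignment satisfies every separation constraint exactly when the selected set meets \cref{def:ICS}, and that the cardinality constraint faithfully bounds $\abs{V^\prime}$. A secondary, purely practical concern is generating the supports $N^+(v) \triangle N^+(w)$ correctly from the edge definitions in \cref{tab:colorassignment} and keeping the certificate small enough to verify; the guaranteed soundness of the verifier then takes care of the rest.
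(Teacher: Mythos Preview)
Your proposal is essentially the paper's proof: encode the negation as a PB formula, refute it with \roundingsat{}, and check the certificate with \veripb{}. The only differences are in the encoding. You emit a separation constraint for all $\binom{32}{2}=496$ unordered pairs; the paper observes that nodes at distance greater than two have disjoint closed neighbourhoods, restricts the separation constraints to pairs with $u\in N_{\leq 2}(v)$, and instead adds a per-node domination constraint $\sum_{u\in N^+(v)} x_u \geq 1$ (motivated by the GCS requirement that every node receive a colour), arriving at $273$ constraints rather than your $497$. Your formulation is in fact the one that matches \cref{def:ICS} verbatim, since that definition does not demand non-empty codes, so its unsatisfiability is the (slightly) stronger statement; the paper's extra domination constraints make the instance easier to refute but tie the encoding to the GCS reading. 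One minor remark: once ``$\abs{V'}\leq 9$'' is refuted you already have $\abs{\text{MICS}}\geq 10$ outright, so the closing appeal to \cref{lem:at-least-9} is unnecessary.
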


First, we show how to encode the GCS problem as a PB formula.
We then describe the steps we took to generate and verify a proof of this theorem.

\subsection{Pseudo-Boolean Encoding}
\label{subsec:pb-encoding}

Recall the Identifying Code Set decision problem as described in \cref{def:ICS-decision-problem}.
Using the Graph Coloring with Seepage (GCS) terminology as described in \cref{subsec:GCS}, we now describe a PB encoding of the ICS decision problem.

In the following discussion, we will use $dist(u,v)$ to denote the distance between two nodes in a graph, \ie{}, the number of edges along the shortest path from $u$ to $v$ (with $dist(v,v) := 0$).
Additionally, we define $N_{\leq 2}(v) := \{u \in V \mid dist(u,v) \leq 2\}$ to be the {\em closed $2$-neighborhood} of a node $v \in V$.
We define $DS(u,v) := N^+(u) \triangle N^+(v)$ to be the {\em distinguishing set} of nodes $u,v \in V$, \ie{}, the symmetric difference of $N^+(u)$ and $N^+(v)$.

We encode the decision version of the GCS/ICS decision problem into a set of PB constraints as follows.
First, we define the Boolean variables $X = \{ x_v \mid v \in V\}$ to be the injection variables, such that $x_v = 1$ indicates that node $v$ is injected with a color and $x_v = 0$ that it is not injected with any color. 
The first set of constraints in $\pbformula(X)$ expresses that each node has to receive a color (either through injection or seepage).
To ensure that a node $v \in V$ is colored, it is enough to ensure that color is injected into $v$ itself, or into one of the nodes that are adjacent to $v$:
\begin{equation}
	\pbformula_{\text{alo}}(X) := \left\{ \sum_{u \in N^+(v)} x_u \geq 1 \mid v \in V\right\}.
	\label{eq:pb-alo}
\end{equation}
The next set of constraints that we need to define, express that all signatures must be unique.
For the signatures of two distinct nodes $u,v \in V$ to coincide, they must have at least one color in common.
The only way in which $u$ and $v$ can share a color, is if they are within a  distance of at most $2$ of each other.
Hence, we only need to explicitly encode that the signatures of nodes $u, v \in V$ are different if they are in each other's closed $2$-neighbourhood.
Furthermore, the only way in which the coloring of $u$ can be different from the coloring of $v$, is if at least one node in their distinguishing set is injected with color.
With these observations, we define the following set of uniqueness constraints:
\begin{equation}
    \pbformula_{\text{unique}} (X) := \left\{ \sum_{w \in DS(u,v)} x_w \geq 1 \mid v \in V, u \in N_{\leq 2}(v) \right\}
\end{equation}
Finally, we add a constraint that says that, for the SBG, there exists a solution with at most nine sensors:
\begin{equation}
	C_{\text{budget}}(X) :=  \sum_{v \in V} x_v \leq 9.\footnote{
        In practice, PB solvers only accept relational operators `$\geq$' and `$=$', so we must convert this constraint into the following form for it to serve as input to a PB solver: $C_{\text{budget,norm}}(X) :=  -\sum_{v \in V} x_v \geq -9$.
    }
    \label{eq:budget}
\end{equation}
With this, we obtain the following PB formula:
\begin{equation}
	\pbformula_{\abs{\text{MICS}} \leq 9}(X) := \constraint{\text{alo}} \cup \constraint{\text{unique}} \cup \left\{C_{\text{budget}}\right\}.
	\label{eq:pb}
\end{equation}
If this formula is unsatisfiable for the SBG, we know that the cardinality of the MICS for the SBG must be greater than nine.
Since we have proved in \cref{sec:combinatorial-lower-bound} that the cardinality of the MICS of the SBG is at least nine, and proved in \cref{sec:upper-bound} that it is at most ten, by proving that it is greater than nine, we prove that the cardinality of the MICS of the SBG is indeed ten.

\subsection{Proving that ten is a Strict Lower Bound for the SBG}

We implemented a script that takes a network, in this case the edge list of the SBG, and returns \cref{eq:pb} in OPB format~\cite{RM12}.
This script was implement in {\sf Python 3.12.1}, using the {\sf Networkx 3.1} library to compute the neighbourhood functions.
The result was a PB formula consisting of $273$ PB constraints.

Then, we used \roundingsat{}~\cite{EN18,Dev20,DGN21,DGD+21}\footnote{Available at \href{https://gitlab.com/MIAOresearch/software/roundingsat}{gitlab.com/MIAOresearch/software/roundingsat}. We used commit $c548e1098a81d1f57dfc31560208034253d174c1$, retrieved on $24$ January $2024$.} to generate a certificate of unsatisfiability of the formula.
The proof is $3884$ lines long, and has a size of $359$ KiB.
It is available in our project repository, along with all other scripts and related files: \href{https://github.com/latower/SBG-bounds}{github.com/latower/SBG-bounds}.
We then used \veripbvtwo{}~\cite{GMM+20,GMNO22,BGMN22,EGMN20,GMN20,GN21,BMM+23}\footnote{Available at \href{https://gitlab.com/MIAOresearch/software/VeriPB}{gitlab.com/MIAOresearch/software/VeriPB}. We used commit $9a1f7588339877471b7b8cf81a3b9702ee494920$, retrieved on $24$ January $2024$.} to verify that all inferences made in the certificate of unsatisfiability are indeed valid, and \veripbvtwo{} confirmed that they are. 
Hence, {\em there exists no MICS of cardinality at most nine for the SBG, so the cardinality of a MICS of the SBG must be at least ten}.

Recall the three conditions for accepting a machine-generated proof that we mentioned in \cref{subsec:proving-mathematical-conjectures}. 
We described our encoding of the statement that the MICS of the SBG is at least ten (\cref{theorem:at-least-10}) in \cref{subsec:pb-encoding}, and invite the reader to convince themselves of its correctness, thus crossing off the first condition.
We provide the implementation of that encoding in the following repository: \href{https://github.com/latower/SBG-bounds}{github.com/latower/SBG-bounds}.
The implementation consists of three {\sf Python} scripts: one to parse an edge list that specifies the SBG, one to convert the graph and a given number of satellites $b$ into a set of PB constraints in OPB format, and one wrapper script to call the encoder.
Again, we invite the reader to study that code and convince themselves of its correctness, which covers the second condition.
Finally, to convince the reader of the correctness of the proof checker, we mention that \veripbvtwo{}'s kernel proof checker, \cakepb{}, has been formally verified using interactive theorem prover \holfour{}~\cite{SN08,BMM+23}. 
Hence, we can trust that it is bug-free, thus convincing us of point (3) that was mentioned in \cref{subsec:proving-mathematical-conjectures}. 

To confirm that there exists at least one solution to minimal identifying code set problem for the SBG, we also encoded $\pbformula_{\abs{\text{MICS}} \leq 10}(X)$, in which \cref{eq:budget} has a degree of ten instead of nine.
Encoding $\pbformula_{\abs{\text{MICS}} \leq 10}(X)$ as an OPB file and solving it with \roundingsat{} showed that, indeed, $\pbformula_{\abs{\text{MICS}} \leq 10}(X)$ is satisfiable.
This confirms that there exists an MICS with cardinality ten for the SBG.

\subsection{Counting the number of solutions}

In \cref{sec:upper-bound}, we showed analytically that there exist at least $26$ distinct identifying code sets (ICSes) with cardinality $10$ for the soccer ball graph (SBG), proving that \cref{thrm:at-least-26} holds.
Using the PB formula $\pbformula_{\abs{\text{MICS}}\geq 10}(X)$, we can prove a stronger statement:
\begin{theorem}
    \label{thrm:exactly-26}
    There exist {\em exactly} $26$ distinct size-$10$ ICSes of the SBG.
\end{theorem}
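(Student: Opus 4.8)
The plan is to prove $\thrm$ by reducing the counting statement to a sequence of unsatisfiability checks that can be discharged by the same \roundingsat{}/\veripbvtwo{} toolchain already used for \cref{theorem:at-least-10}. Since \cref{thrm:at-least-26} establishes the lower bound of $26$ and \cref{theorem:at-least-10} establishes that every ICS has cardinality at least ten, it remains to show the matching upper bound: there are \emph{at most} $26$ size-$10$ ICSes. I would do this by exhibiting the full set of $26$ solutions explicitly (these are exactly the Class I--IV injections from the proof of \cref{thrm:at-least-26}) and then certifying that no eleventh distinct solution exists.

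First I would use the PB formula $\pbformula_{\abs{\text{MICS}} \leq 10}(X)$ from \cref{subsec:pb-encoding}, whose satisfying assignments are in bijection with the size-$\leq 10$ ICSes of the SBG. Combined with the constraint that the cardinality is \emph{exactly} ten (which holds for every solution, by \cref{theorem:at-least-10}, since no smaller ICS exists), every satisfying assignment corresponds to a size-$10$ ICS and vice versa. The task thus becomes: count the models of this PB formula. The cleanest route is exhaustive model enumeration with a \emph{blocking-clause} (solution-enumeration) loop: repeatedly call the solver, and each time a solution $\assignment$ is found, add the blocking constraint $\sum_{v : \assignment(x_v)=1} \overline{x}_v \geq 1$ to forbid that exact assignment, then re-solve. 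Each blocking constraint rules out precisely one of the $\binom{32}{10}$ candidate subsets. After collecting what we expect to be the $26$ Class I--IV solutions, the final solver call on the augmented formula must return \emph{unsatisfiable}, and that refutation is the certificate we pass to \veripbvtwo{}.

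Concretely, the steps in order are: (1) enumerate all solutions of $\pbformula_{\abs{\text{MICS}} \leq 10}(X)$ via the blocking loop, recording each as an explicit node set; (2) verify the count is $26$ and cross-check the enumerated sets against the four classes from \cref{thrm:at-least-26}; (3) form the final formula $\pbformula_{\abs{\text{MICS}} \leq 10}(X) \cup \{B_1, \ldots, B_{26}\}$, where each $B_i$ is the blocking constraint for the $i$-th solution; (4) have \roundingsat{} produce a refutation certificate for this augmented (now unsatisfiable) formula; and (5) run \veripbvtwo{} to check the certificate, so that the kernel \cakepb{} (formally verified in \holfour{}) vouches for it. Unsatisfiability of the augmented formula means no twenty-seventh size-$10$ ICS exists, giving \emph{at most} $26$; together with \cref{thrm:at-least-26} this yields \emph{exactly} $26$.

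The main obstacle I anticipate is not the logic but the trust argument for the enumeration phase: the blocking-clause loop is driven by a \emph{script}, and a single missed or mis-recorded solution would silently corrupt the count, yet the only part that carries a machine-checkable guarantee is the final refutation. To keep the proof socially acceptable under the three criteria of \cref{subsec:proving-mathematical-conjectures}, I would make the blocking constraints $B_1, \ldots, B_{26}$ an explicit, human-inspectable list (each naming ten SBG vertices), so a reader can independently confirm that these are genuinely size-$10$ ICSes (by re-deriving the seepage colorings of \cref{tab:color-assignment-after-seepage-in-the-SBG} and the Class II/III tables) and that they are pairwise distinct. The certified \veripbvtwo{} step then only needs to establish that \emph{these particular} $26$ sets are the only ones, which is exactly the refutation of the augmented formula; the correctness of the enumeration is thereby reduced to checking the $26$ listed sets by hand plus one verified unsatisfiability proof, rather than trusting the enumeration loop itself.
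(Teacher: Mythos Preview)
Your proposal is correct and takes essentially the same approach as the paper: enumerate solutions of the PB formula by a blocking-constraint loop with \roundingsat{}, then certify the final unsatisfiable instance with \veripbvtwo{}. The paper's script likewise checks that the enumerated solutions are pairwise distinct and satisfy the original formula, so even your trust discussion mirrors what the paper actually does.
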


To prove that \cref{thrm:exactly-26} holds, we wrote a script that does the following.
It takes $\pbformula_{\abs{\text{MICS}}\geq 10}(X)$ and uses \roundingsat{} to find a solution $\sigma : X \mapsto \{0,1\}$.
Suppose that $X_{1} \subseteq X$ are all the variables that are mapped to $1$ by $\sigma$, and $X_0 \subseteq X$ are all the variables that are mapped to $0$ by $\sigma$, such that $X_1 := \{ x \in X \mid \sigma(x) = 1\}$, $X_0 := \{ x \in X \mid \sigma(x) = 0\}$, $X = X_1 \cup X_0$ and $X_1 \cap X_0 = \varnothing$.
Now, the script generates the following constraint:
\begin{equation}
    \constraint{\neg \sigma} := \sum_{x \in X_1} (1-x) + \sum_{x \in X_0} x \geq 1,
\end{equation}
which expresses that at least one variable $x \in X$ must take a different value than what it was assigned by solution $\sigma$. 
We call this constraint a {\em blocking constraint} for $\sigma$, because it cannot be satisfied by $\sigma$.
This blocking constraint is rewritten to the following constraint, so it is in the right input format for our PB solver:
\begin{equation}
    \constraint{\neg \sigma} := -\sum_{x \in X_1} x + \sum_{x \in X_0} x \geq 1 - \abs{X_1}.
\end{equation}
Then, the script generates a new formula 
\begin{equation}
    \pbformula_{\abs{\text{MICS}}\geq 10}^\prime(X) := \pbformula_{\abs{\text{MICS}}\geq 10}(X) \cup \{\constraint{\neg \sigma}\},
\end{equation}
and uses \roundingsat{} to find a solution $\sigma^\prime : X \mapsto \{0,1\}$, which is then turned into a blocking constraint, and added to the formula.
This process continues until the PB formula becomes unsatisfiable.

The script then counts all the solutions, checks that they are all distinct, checks that each found solution is indeed a solution to the original PB formula $\pbformula_{\abs{\text{MICS}}\geq 10}(X)$, and uses \veripbvtwo{} to verify that the refutation certificate of the final, unsatisfiable formula is correct.
All scripts and results can be found in our repository: \href{https://github.com/latower/SBG-bounds}{\texttt github.com/latower/SBG-bounds}.
\FloatBarrier
\section{Conclusion}
\label{sec:conclusion}

We have studied an event monitoring problem with satellites as sensors and a soccer ball as a model of the planet Earth. 
Here, each patch of the soccer ball models a region of the globe that can be monitored by one satellite for big events. 
The assumption is that the effects of a big event taking place in one region, spills out to the adjacent regions.
Hence, we do not need to employ one satellite per region to monitor the entire earth for big events.
The problem is to find a cardinality-minimal set of regions for which we must deploy a satellite in order to monitor the entire earth, and uniquely identify in which region a big event takes place.
We modelled this problem as a {\em minimum identifying code set (MICS)} problem.

In this work, we provided human-oriented, analytical proofs that the MICS is at least $9$ and at most $10$ for the soccer ball graph that models satellite deployment problem.
Then, we provided a machine-oriented proof that the MICS of the soccer ball is $10$, meaning that there exists no way to employ $9$ or fewer satellites in the soccer ball model and be able to monitor all regions and uniquely identify the region in which a big event takes place.
To this end, we modelled the problem as a set of Pseudo-Boolean constraints, in which we fix the number of satellites to at most $9$.
We then used a Pseudo-Boolean solver to generate a machine-readable proof that such a set of constraints is unsatisfiable, whereas the set of constraints {\em is} satisfiable if we fix the number of deployed satellites to at most $10$.

We also provided a human-oriented, analytical proof that the SBG has at least $26$ distinct identifying code sets of cardinality $10$.
We then used a machine-oriented approach to prove that SBG has in fact {\em exactly} $26$ distinct identifying code sets of cardinality $10$.

In our future work, we plan to study regions with irregular shapes, thereby relaxing the hexagonal and pentagonal region constraints.
We also plan to use machine-generated and machine-verified proofs to other (quantitative) results on sensor placement problems.

\section*{Acknowledgements}
This work is supported by National Research Foundation Singapore, under its NRF Fellowship Programme under Grant No. NRF-NRFFAI1-2019-0004, by the Ministry of Education Singapore Tier 2 Grant No. MOE-T2EP20121-0011, and by the Ministry of Education Singapore Tier 1 Grant No. R-252-000-B59-114.

\printbibliography

\end{document}